
\documentclass{article}

\usepackage{microtype}
\usepackage{graphicx}
\usepackage{subfigure}
\usepackage{booktabs} 

\usepackage{hyperref}
\usepackage{balance}



\usepackage[accepted]{icml2025}

\usepackage{amsmath}
\usepackage{amssymb}
\usepackage{mathtools}
\usepackage{amsthm}
\usepackage{url}
\usepackage{makecell, multirow, tabularx}
\usepackage{wrapfig}
\usepackage{xcolor}
\usepackage{enumitem}
\usepackage{caption}  
\usepackage{subcaption} 

\usepackage[capitalize,noabbrev]{cleveref}

\theoremstyle{plain}
\newtheorem{theorem}{Theorem}[section]
\newtheorem{proposition}[theorem]{Proposition}

\theoremstyle{definition}
\newtheorem{definition}[theorem]{Definition}

\theoremstyle{remark}

\usepackage[textsize=tiny]{todonotes}

\icmltitlerunning{Neural-Symbolic Message Passing with Dynamic Pruning}

\begin{document}

\twocolumn[
\icmltitle{Neural-Symbolic Message Passing with Dynamic Pruning}




\begin{icmlauthorlist}
\icmlauthor{Chongzhi Zhang}{scut}
\icmlauthor{Junhao Zheng}{scut}
\icmlauthor{Zhiping Peng}{gupt,jp}
\icmlauthor{Qianli Ma}{scut}
\end{icmlauthorlist}

\icmlaffiliation{scut}{South China University of Technology, Guangzhou, China}
\icmlaffiliation{gupt}{Guangdong University of Petrochemical Technology, Maoming, China}
\icmlaffiliation{jp}{Jiangmen Polytechnic, Jiangmen, China}

\icmlcorrespondingauthor{Qianli Ma}{qianlima@scut.edu.cn}

\icmlkeywords{knowledge graph, complex query answering, neural and symbolic, message passing}

\vskip 0.3in
]



\printAffiliationsAndNotice{}  

\begin{abstract}
Complex Query Answering (CQA) over incomplete Knowledge Graphs (KGs) is a challenging task. Recently, a line of message-passing-based research has been proposed to solve CQA. However, they perform unsatisfactorily on negative queries and fail to address the noisy messages between variable nodes in the query graph. Moreover, they offer little interpretability and require complex query data and resource-intensive training. In this paper, we propose a Neural-Symbolic Message Passing (NSMP) framework based on pre-trained neural link predictors. By introducing symbolic reasoning and fuzzy logic, NSMP can generalize to arbitrary existential first order logic queries without requiring training while providing interpretable answers. Furthermore, we introduce a dynamic pruning strategy to filter out noisy messages between variable nodes. Experimental results show that NSMP achieves a strong performance. Additionally, through complexity analysis and empirical verification, we demonstrate the superiority of NSMP in inference time over the current state-of-the-art neural-symbolic method. Compared to this approach, NSMP demonstrates faster inference times across all query types on benchmark datasets, with speedup ranging from 2$\times$ to over 150$\times$. 
\end{abstract}

\section{Introduction}
\label{Introduction}

\begin{figure}
    \centering
    \includegraphics[width=0.5\linewidth]{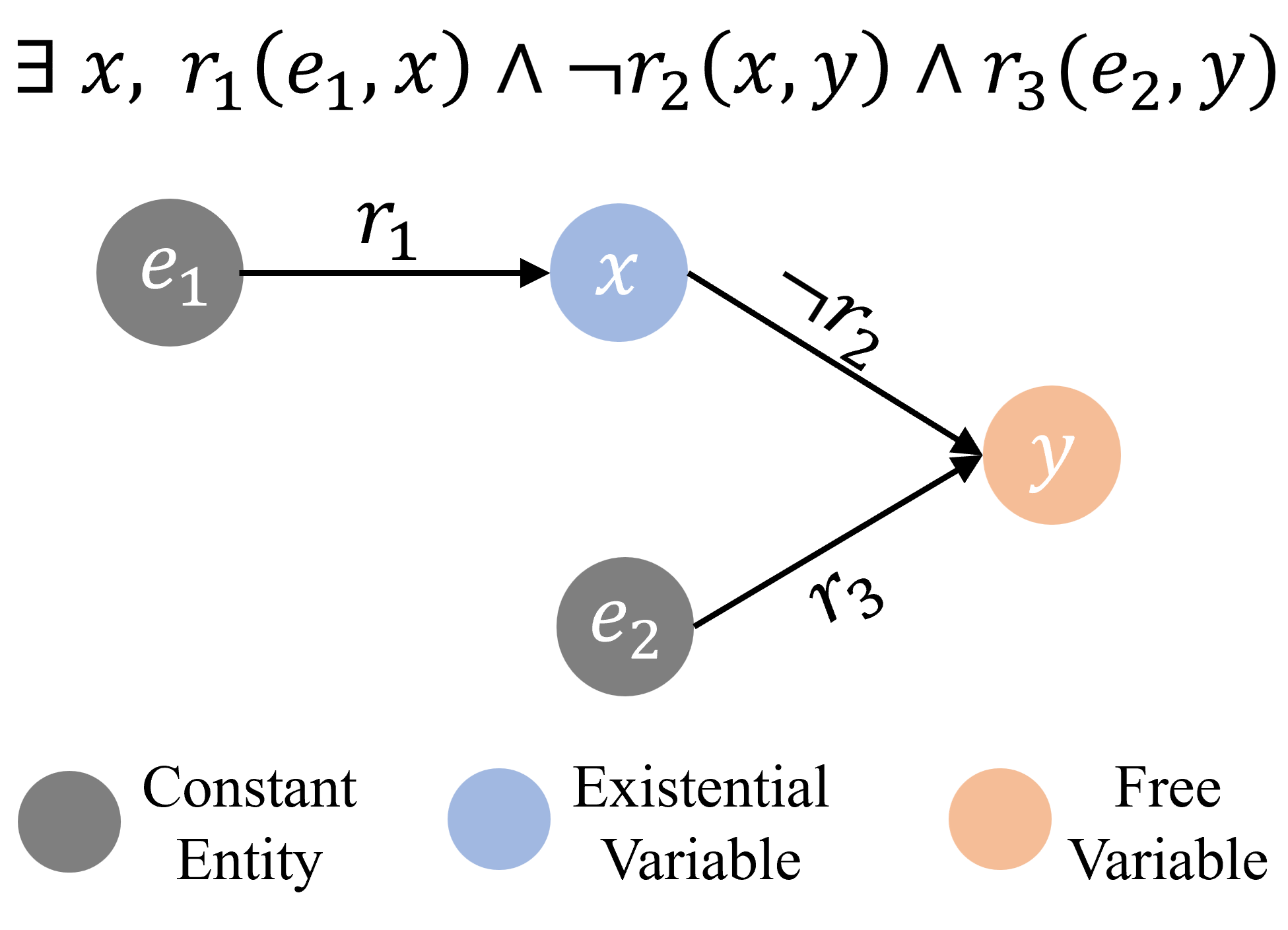}
    \caption{A query graph representation of a given logical formula. }
    \label{figure1}
\end{figure}

Knowledge graphs (KGs), which encode knowledge as relationships between entities, have received widespread attention from both academia and industry \citep{ji2021survey}. 
Complex Query Answering (CQA) over KGs is a fundamental and practical task, which requires answering existential first order logic formulas with logical operators, including conjunction ($\wedge$), disjunction ($\vee$), negation ($\neg$), and existential quantifier ($\exists$). 
A straightforward way is to traverse the KG to identify the answers directly \citep{zou2011gstore}. 
However, KGs often suffer from incompleteness due to the Open World Assumption (OWA) \citep{libkin2009open}. 
This makes it impossible to answer a complex query with missing links using traditional traversal methods. 
Therefore, CQA models need to possess the ability for non-trivial reasoning \citep{ren2020query2box, ren2020beta}.

Inspired by the success of neural link predictors \citep{bordes2013translating, trouillon2016complex,sun2018rotate, li2022house} on answering one-hop atomic queries on incomplete KGs, neural models \citep{hamilton2018embedding, ren2020beta, zhang2021cone,chen2022fuzzy,wang2023wasserstein,zhang2024pathformer} have been proposed to represent the entity sets by low-dimensional embeddings. 
Building on the foundation laid by these neural CQA models, message-passing-based research \citep{wang2023logical, zhang2024conditional} has demonstrated promising advancements in CQA. These message passing approaches represent logical formulas as query graphs, where edges correspond to atomic formulas (or their negations) with binary predicates, and nodes represent either constant entities or variables, as illustrated in Fig. \ref{figure1}. 
By utilizing pre-trained neural link predictors, they perform one-hop inference on edges, thereby inferring intermediate embeddings for variable nodes. An intermediate embedding is interpreted as a logical message passed from the neighboring node on the corresponding edge. Following the message passing paradigm \citep{gilmer2017neural}, the embeddings of variable nodes are updated to retrieve answers. Due to the integration of pre-trained neural link predictors, these message passing approaches are effective on both one-hop atomic and multi-hop complex queries. However, limitations still exist.

Firstly, even when augmented with fuzzy logic \citep{hajek2013metamathematics} for one-hop inference on negated edges, existing message passing CQA models still perform unsatisfactorily on negative queries. 
Secondly, while recent work \citep{zhang2024conditional} has considered noisy messages between variable and constant nodes, noisy messages between variable nodes remain unexplored. At the initial layers of message passing, messages inferred from neighboring variable nodes—whose states have not yet been updated—are ineffective. Aggregating such messages to update node states is essentially equivalent to introducing noise. 
Thirdly, similar to most neural models, they offer little interpretability and require training on large complex query datasets, which entails substantial training costs. In practical scenarios, gathering meaningful complex query data poses a significant challenge.

In this paper, we propose a Neural-Symbolic Message Passing (NSMP) framework, which leverages a simple pre-trained neural link predictor without requiring training on any complex query data. Specifically, 
NSMP conducts one-hop inference on edges by integrating neural and symbolic reasoning to compute intermediate states for variable nodes. 
The intermediate state can be interpreted as a message, represented by a fuzzy vector that denotes the fuzzy set of the variable within the corresponding atomic formula (i.e., the edge). 
In particular, we propose a novel pruning strategy that dynamically filters out unnecessary noisy messages between variable nodes during message passing. 
Based on fuzzy logic theory, NSMP aggregates the messages received by the variable nodes and updates the node states. 
Such a mechanism attains interpretability for the variables in the formula and can naturally execute the negation operator through fuzzy logic. Extensive experimental results show that NSMP achieves competitive performance with more efficient inference. 
In general, our main contributions can be summarized as follows: 
\begin{itemize}
    \item 
    We propose a neural-symbolic message passing framework that, for the first time, integrates neural and symbolic reasoning within a message passing CQA model. 
    This enables the message passing approach to naturally answer arbitrary existential first order logic formulas, similar to some other step-by-step neural-symbolic methods, without requiring training on complex queries, while also providing interpretability. 
    \item 
    We propose a novel dynamic pruning strategy to filter out noisy messages between variable nodes during message passing, thereby improving performance and efficiency. 
    \item 
    Extensive experimental results on benchmark datasets show that NSMP achieves a strong performance. In particular, NSMP significantly improves the performance of message passing CQA models on negative queries by introducing symbolic reasoning. 
    \item 
    Through computational complexity analysis, we reveal that our message-passing-based method offers more efficient inference than the current State-Of-The-Art (SOTA) step-by-step neural-symbolic method. Empirically, we verify that NSMP achieves faster inference times than the SOTA model, with speedups ranging from 2$\times$ to over 150$\times$.

\end{itemize}

\section{Related Work}
\label{Related Work}

In recent years, neural models \citep{hamilton2018embedding, ren2020beta} have been proposed to solve complex query answering by representing sets of entities using low-dimensional embeddings. Among these models, message-passing-based approaches \citep{wang2023logical,zhang2024conditional} have demonstrated promising potential. 
There are also several neural models enhanced with symbolic reasoning \citep{bai2023answering, yin2024rethinking}, which are capable of providing interpretable answers to complex queries. 
Further discussion for related work can be found in App. \ref{further related work}.

\section{Preliminaries}
\label{Preliminaries}

\paragraph{Knowledge Graphs and Complex Queries}

Let $\mathcal{V}$ be the finite set of entities and $\mathcal{R}$ be the finite set of relations. A knowledge graph $\mathcal{KG}$ can be defined as a set of triples $\mathcal{E} =\{(e_{h_{i}},r_{i},e_{t_{i}})\}$ that encapsulates the factual knowledge, where each triple encodes a relationship of type $r_{i} \in \mathcal{R}$ between the head and tail entity $e_{h_{i}}, e_{t_{i}} \in \mathcal{V}$. The complex queries that existing studies aim to address are essentially Existential First Order queries with a single free variable ($\text{EFO}_1$) \citep{yin2024rethinking}. In this case, a knowledge graph is a knowledge base defined by a specific first-order language \citep{wang2022logical}, where each entity $e\in \mathcal{V}$ is a constant, and each relation $r \in \mathcal{R}$ is a binary predicate that defines whether there is a directed edge between a pair of constant entities, i.e., $(e_i,r_{i},e_j) \in \mathcal{E}$ iff $r_i(e_i,e_j)=True$. We follow previous works \citep{ren2020beta,wang2023logical} and give the definition of  $\text{EFO}_1$ queries in Disjunctive Normal Form (DNF). 
\begin{definition}
\textnormal{($\text{EFO}_1$ Query in DNF)}.  The disjunctive normal form of an $\text{EFO}_1$ query $\phi$ is
\begin{equation}
\begin{split}
    \phi(y,x_{1},...,x_{k}) =  SCQ_1(y,x_{1},...,x_{k}) \\
    \vee... \vee  SCQ_d(y,x_{1},...,x_{k}), \label{first-order formula}
\end{split}
\end{equation}
where $y$ is a single free variable, $x_i$ represents the existential variables, $SCQ_i$ is $i$-th Sub-Conjunctive Query (SCQ) of $\phi(y,x_{1},...,x_{k})$, that is, $SCQ_i=\exists x_{1},...,\exists x_{k} a_{1}^{i}\wedge ...\wedge a_{n_{i}}^{i}$. $a_{j}^{i}$ is an atomic formula or its negation, i.e., $a_{j}^{i}=r(t_1,t_2)$ or $a_{j}^{i}=\neg r(t_1,t_2)$, $t_i$ is a term that is either a variable or a constant.  \label{def:def5}
\end{definition}
According to \citep{ren2020query2box}, one can solve $\text{EFO}_1$ queries containing the disjunction operator by solving sub-conjunctive queries with DNF. This DNF-based approach has been shown to be effective and scalable in previous studies \citep{ren2022smore, wang2023logical}. 
For a complex query $\phi$, let $\mathcal{A}[\phi]$ represent the set of answer entities. In the case of using DNF, $\mathcal{A}[\phi]$ is the union of the answer sets for its sub-conjunctive queries, i.e., $\mathcal{A}[\phi] =  {\textstyle \bigcup_{i=1}^{d}} \mathcal{A}[SCQ_i]$. 
For the answer set of $SCQ_i$, an entity $e_a \in \mathcal{V}$ is considered an answer to $SCQ_i$ only when $SCQ_i(y=e_a,x_{1},...,x_{k})=True$. 
Clearly, in this case, we have $e_a \in \mathcal{A}[SCQ_i] \in \mathcal{A}[\phi]$. In our work, we follow previous works \citep{wang2023logical,zhang2024conditional} and use DNF to answer $\text{EFO}_1$ queries.

\paragraph{Operator Tree and Query Graph}

Most of the previous CQA models represent complex queries as operator trees \citep{ren2020query2box, wang2022benchmarking}. In this case, first-order logic operators are transformed into set logic operators. Specifically, conjunctions are replaced by intersections, disjunctions by unions, negations by complements, and the existential quantifier is replaced by set projection. However, according to \citep{yin2024rethinking}, operator trees can only handle cyclic $\text{EFO}_1$ queries in an approximate way. In contrast, more expressive query graphs \citep{wang2023logical} provide a representation that can naturally model conjunctive queries,  as illustrated in the example in Fig. \ref{figure1}. In this paper, we follow prior work \citep{wang2023logical} and represent conjunctive queries as expressive query graphs. 
Following \citep{yin2024rethinking}, we give the definition to describe the query graph here. 
\begin{definition}
\textnormal{(Query Graph)}. A query graph can be defined as a set of atomic formulas with a possible negation operator, i.e., $\{a_{i}\}$. According to Definition \ref{def:def5}, $a_i$ defines an edge from term node $t_1$ to term node $t_2$, where the type of the edge is either $r$ or $\neg r$. Specifically, this edge is determined by two terms, a directed edge of type $r$ and a negation indicator (positive or negative). \label{def:def6}
\end{definition}

\paragraph{Neural Link Predictors}

A neural link predictor \citep{trouillon2016complex} is a latent feature model used to answer one-hop atomic queries. It leverages its scoring function to compute a continuous truth value $\varphi(h,r,t) \in [0,1]$ for an input triple, where $h$, $r$, and $t$ represent the embeddings of the head entity, the relation, and the tail entity, respectively. Since the neural link predictor is applied within the message passing CQA models \citep{wang2023logical,zhang2024conditional} to perform one-hop inference on edges in a query graph, the input embeddings also involve variable embeddings.

\paragraph{Neural One-hop Inference on Edges}
\label{neural one-hop}

To utilize the neural link predictor for answering complex queries, \citet{wang2023logical} proposed performing neural one-hop inference on edges (i.e., atomic formulas or their negations) in the query graph to compute logical messages, and thus defining a neural message encoding function $\rho$ in the form of continuous truth value maximization. 
According to Definition \ref{def:def6}, an edge in the query graph is determined by the terms, directed relation type, and negation indicator, which together serve as the input arguments to $\rho$. 
Based on the direction information of the edge and the negation indicator, \citet{wang2023logical} divided the neural message encoding function 
$\rho$ into four cases. 
Firstly, for a non-negated edge, given the embedding of the tail node $t$ and the relation embedding $r$, $\rho$ is formulated to infer the intermediate embedding $\hat{h}$ for the head node on this edge:
\begin{equation}
\begin{split}
    \hat{h} =\rho (t,r,D_{t\to h},Pos):=\underset{v\in \mathcal{R}_s} {arg\ max}\ \varphi(v,r,t), \label{rho1}
\end{split}
\end{equation}
where $\mathcal{R}_s$ is the search domain for the variable embedding $v$, $D_{t\to h}$ indicates the direction information (i.e., inferring the message passed from the tail node to the head node), and $Pos$ denotes that this edge is a positive atomic formula. 
Accordingly, one can infer the intermediate embedding $\hat{t}$ for the tail node given head embedding $h$ and relation embedding $r$: 
\begin{equation}
\begin{split}
    \hat{t} =\rho (h,r,D_{h\to t},Pos):=\underset{v\in \mathcal{R}_s }{arg\ max}\ \varphi(h,r,v), \label{rho2}
\end{split}
\end{equation}
where $D_{h\to t}$ indicates inferring the message passed from the head node to the tail node. For negated edges, \citet{wang2023logical} proposed utilizing the fuzzy logic negator \citep{hajek2013metamathematics} to infer intermediate embeddings for nodes on the edges:
\begin{align}
    \hat{h} =\rho (t,r,D_{t\to h},Neg) &:= \underset{v\in \mathcal{R}_s }{arg\ max}\ \varphi(v,\neg r,t) \notag \\
    &= \underset{v\in \mathcal{R}_s }{arg\ max}\ [1 - \varphi(v,r,t)]
    , \label{rho3} \\
    \hat{t} =\rho (h,r,D_{h\to t},Neg) &:= \underset{v\in \mathcal{R}_s }{arg\ max}\ \varphi(h,\neg r,v) \notag  \\ 
    &= \underset{v\in \mathcal{R}_s }{arg\ max}\ [1 - \varphi(h,r,v)], \label{rho4} 
\end{align}
where $Neg$ denotes that this edge is a negated atomic formula. 
In our work, we perform symbolic-integrated one-hop inference on edges (see Sec. \ref{ns one-hop}), incorporating both neural and symbolic components into our message encoding function. 
We follow \citep{wang2023logical} and use the above neural message encoding function $\rho$ as the neural component of our neural-symbolic message encoding function.

\section{Proposed Method}

In this section, we first propose how to integrate symbolic reasoning into neural one-hop inference to compute neural-symbolic messages. Then, we propose our message passing mechanism based on fuzzy logic and dynamic pruning from two views. Finally, we analyze the computational complexity of the proposed method to reveal the superiority of our message-passing-based method in terms of efficiency. 

\subsection{Neural-Symbolic One-hop Inference on Edges}
\label{ns one-hop}

Since we aim to integrate symbolic information into neural message passing, there are two different representations of entities and relations in our work: neural and symbolic representations. For the neural representation, since we utilize the pre-trained link predictor to compute neural messages, the entities and relations are encoded into the embedding space of the pre-trained neural link predictor. 
That is, they already have pre-trained embeddings. 
For the symbolic representation, each entity $e \in \mathcal{V}$ is encoded as a one-hot vector $p_e \in \{0,1\}^{1\times |\mathcal{V}|}$ and each relation $r \in \mathcal{R}$ is represented as an adjacency matrix $M_r \in \{0,1\}^{|\mathcal{V}|\times |\mathcal{V}|}$, where 
$M_r^{ij}=1$ if $(e_i,r,e_j) \in \mathcal{E}$ else $M_r^{ij}=0$. 
We follow \citep{zhang2024conditional} and only consider inferring the intermediate state for the variable, so the neural and symbolic representations of entities and relations remain unchanged. 
Accordingly, we also assign neural and symbolic representations to the variable nodes in the query graph. 
Specifically, each variable is equipped with a corresponding embedding and symbolic vector.
However, the symbolic representation of a variable is not a one-hot vector but a fuzzy vector $p_v \in [0,1]^{1\times |\mathcal{V}|}$ that represents a fuzzy set. Each element of $p_v$ can be interpreted as the probability of the corresponding entity. 

\paragraph{Symbolic One-hop Inference on Edges}
To conduct symbolic reasoning on edges, we follow TensorLog \citep{cohen2020tensorlog} and define a symbolic one-hop inference function $\mu$ for four cases depending on the input arguments. 
The first situation is to infer the intermediate symbolic vector $\hat{p_t}$ of the tail node given the head symbolic vector $p_h$ and relational adjacency matrix $M_r$ on a non-negated edge: 
\begin{equation}
\begin{split}
    \hat{p_t}=\mu(p_h,M_r,D_{h\to t},Pos):=\mathcal{N}(p_hM_r)
    . \label{mu1}
\end{split}
\end{equation}
$\mathcal{N}(\cdot)$ is a thresholded normalized function that is defined as follows:
\begin{equation}
\begin{split}
    \mathcal{N}(p) = \frac{p \cdot \mathbf{1}(p \geq \epsilon )}{\max\left( \epsilon, \sum \left(p \cdot \mathbf{1}(p \geq \epsilon )\right)\right)}
    , \label{normalized function}
\end{split}
\end{equation}
where $\epsilon$ represents the threshold and $\mathbf{1}(p \geq \epsilon )$ is an indicator function. 
Similarly, the intermediate symbolic vector $\hat{p_h}$ of the head node can be inferred given the tail symbolic vector $p_t$ and $M_r$:
\begin{equation}
\begin{split}
    \hat{p_h}=\mu(p_t,M_r,D_{t\to h},Pos):=\mathcal{N}(p_tM_{r}^{\top})
    , \label{mu2}
\end{split}
\end{equation}
where $\top$ stands for transpose. 
Based on the fuzzy logic theory \citep{klement2013triangular, hajek2013metamathematics}, 
we follow \citep{xu2022neural} to introduce a hyperparameter $\alpha$ to the fuzzy logic negator and define the estimation of the intermediate symbolic vector on negated edges as follows:
\begin{align}
    \hat{p_t}=\mu(p_h,M_r,D_{h\to t},Neg) &:=\mathcal{N}(\frac{\alpha }{|\mathcal{V} |} - p_hM_{r}), \label{mu3} \\
    \hat{p_h}=\mu(p_t,M_r,D_{t\to h},Neg) &:=\mathcal{N}(\frac{\alpha }{|\mathcal{V} |} -p_tM_{r}^{\top})
    . \label{mu4}
\end{align}

\paragraph{Integrating Neural and Symbolic Reasoning}
In order to integrate neural reasoning to enhance symbolic reasoning, we follow \citep{xu2022neural} and convert the intermediate embedding obtained by the neural message encoding function $\rho$ into a fuzzy vector. 
Specifically, for an intermediate embedding inferred by $\rho$, we first compute its similarity with the embeddings of all entities. After applying a softmax operation, we can obtain a fuzzy vector $p' \in [0,1]^{1\times |\mathcal{V}|}$. We define this procedure as a function $f$ as follows:
\begin{equation}
\begin{split}
    f(\rho) = softmax ( \underset{\forall e\in \mathcal{V} }{concat}  ( \mathcal{S}(\rho, E_{e}) ) )
    , \label{f func}
\end{split}
\end{equation}
where $E_{e}$ represents the embedding of the entity $e$, $\mathcal{S}$ is a binary similarity function, and $concat$ is a function that maps the similarities between all entities and the intermediate embedding inferred by $\rho$ to a vector. 
Depending on the selected pre-trained neural link predictor, $\mathcal{S}$ can either be an inner-product-based or a distance-based scoring function. 

Then, we can define our neural-symbolic message encoding function $\varrho$, which also has four cases depending on the input arguments.  
\begin{figure*}
    \centering
    \includegraphics[width=1\textwidth]{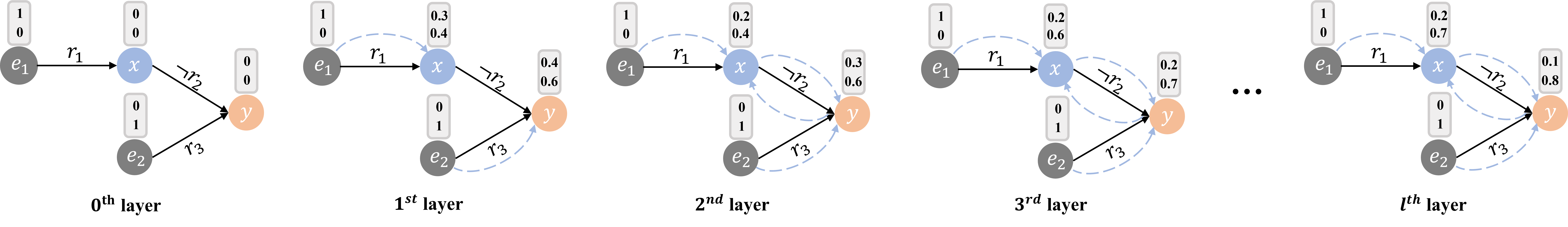}
    \caption{A toy example to show the process of dynamic pruning. The blue arrow represents the passing of the message computed by $\varrho$, and the vector indicates the state of the corresponding node after message passing at each layer. }
    \label{figure2}
\end{figure*}
Given the embedding $t$ and symbolic vector $p_t$ of the tail node on a non-negated edge, as well as the embedding $r$ and adjacency matrix $M_r$ of the relation, we infer the intermediate state $p_{\hat{h}}$ for the variable node at the head position on this edge. We formulate the inference task in the form of neural-enhanced symbolic reasoning: 
\begin{align}
    &p_{\hat{h}} = \varrho (t,p_t,r,M_r,D_{t\to h},Pos):= \notag \\
                & \mathcal{N}(f(\rho (t,r,D_{t\to h},Pos)) +\mu(p_t,M_r,D_{t\to h},Pos)). \label{varrho1}
\end{align}
This approach leverages the embeddings inferred by pre-trained neural link predictors to enhance symbolic reasoning results. It enables symbolic reasoning to handle missing links in observed knowledge graphs. Additionally, such an approach can represent the membership degree of variables concerning all entities in the form of fuzzy sets, thereby providing interpretability. 
Similarly, for the other three cases, the encoding functions $\varrho$ are as
follows: 
\begin{align}
    &p_{\hat{t}} =\varrho (h,p_h,r,M_r,D_{h\to t},Pos):= \notag \\
    &\mathcal{N}(f(\rho (h,r,D_{h\to t},Pos))
    +\mu(p_h,M_r,D_{h\to t},Pos)), \label{varrho2} \\
    &p_{\hat{h}} =\varrho (t,p_t,r,M_r,D_{t\to h},Neg):=  \notag \\
    &\mathcal{N}(f(\rho (t,r,D_{t\to h},Neg)) +\mu(p_t,M_r,D_{t\to h},Neg)), \label{varrho3} \\
    &p_{\hat{t}} =\varrho (h,p_h,r,M_r,D_{h\to t},Neg):= \notag \\
    &\mathcal{N}(f(\rho (h,r,D_{h\to t},Neg))+\mu(p_h,M_r,D_{h\to t},Neg)). \label{varrho4}
\end{align}

\subsection{Neural-Symbolic Message Passing}

In this subsection, we propose a Neural-Symbolic Message Passing (NSMP) framework. This framework builds on the neural-symbolic one-hop inference proposed in Sec. \ref{ns one-hop} and incorporates a dynamic pruning strategy. 
Since NSMP is a variant of message passing networks \citep{gilmer2017neural, xu2018how}, we focus on the details of the operations within a single NSMP layer. Specifically, each NSMP layer consists of two stages, each operating under different views: (1) Query Graph View: Passing neural-symbolic messages on the query graph with our proposed dynamic pruning strategy; (2) Node State View: Updating the state of each variable node that has received messages at the current layer. In the following, we first describe our framework from these two views and then explain how to answer complex queries with the proposed NSMP.

\subsubsection{Query Graph View: Message Passing with Dynamic Pruning}
\label{DP}

For constant nodes in a query graph, we follow \citep{zhang2024conditional} and do not pass messages to constant nodes. 
However, the situation varies across different layers of message passing when passing messages to variable nodes in a query graph. 
At the initial layers, messages passed from neighboring variable nodes, whose states have not yet been updated, can be regarded as noise, as these nodes do not carry any meaningful information at this stage. 
In contrast, at the later layers, messages from neighboring variable nodes with updated states provide valuable information derived from constants, which is especially important for variable nodes not directly connected to constant nodes. 
To dynamically filter out unnecessary noisy messages while retaining valuable ones, 
we determine if a variable node should pass messages to its neighboring variable nodes by checking whether its state has undergone any updates. 
Specifically, a variable node is allowed to pass messages to its neighboring variable nodes only when its state has been updated. 
We refer to such a strategy as dynamic pruning. The ``dynamic" nature lies in the fact that this strategy dynamically adapts the pruning process for different cases, without the need to design specific pruning strategies for different query graphs or layer indices. In particular, this strategy not only filters out noisy messages, but also effectively avoids the computation of noise messages to improve the efficiency of message passing. 
Fig. \ref{figure2} follows the query graph in Fig. \ref{figure1} to illustrate an example of this pruning strategy at different layers.

\subsubsection{Node State View: Neural-Symbolic State Update Scheme}
\label{Node State}

After message passing on a query graph, for each node that has received messages, we need to aggregate the received messages and update its intermediate state. Let $n_c^{(l)}$ and $s_c^{(l)}$ denote the neural and symbolic state of a constant node at the $l^{th}$ layer, respectively, while $n_v^{(l)}$ and $s_v^{(l)}$ represent the neural and symbolic state of a variable node at the same layer. According to Sec. \ref{DP}, we do not pass messages to constant nodes and variable nodes that do not receive messages. In this case, we do not update the state of these nodes at the corresponding NSMP layer. In particular, the state of a constant node at all layers is equal to its initial state, i.e., $n_{c}^{(0)}=n_{c}^{(l)}$ and $s_{c}^{(0)}=s_{c}^{(l)}$, where $n_{c}^{(0)}$ and $s_{c}^{(0)}$ represent the neural pre-trained embedding and the symbolic one-hot vector of the corresponding constant entity, respectively. For the state of a variable node that does not receive messages, its state at the current layer is equal to its state at the previous layer, i.e., $n_{v}^{(l)}=n_{v}^{(l-1)}$ and $s_{v}^{(l)}=s_{v}^{(l-1)}$, where $l>0$. Both $n_{v}^{(0)}$ and $s_{v}^{(0)}$ are vectors containing all zeros.

For a variable node that has received messages, the messages it receives from neighboring nodes essentially represent intermediate states inferred by the neural-symbolic message encoding function $\varrho$, as proposed in Section \ref{ns one-hop}. 
These intermediate states, which are fuzzy vectors representing fuzzy sets, can be aggregated using fuzzy logic theory to update the state of the variable node.
Specifically, for a variable node $v$, we form a neighbor set $Neigbor_{DP}(v)$ that includes all its neighboring variable nodes with updated states and all its neighboring constant nodes, in accordance with the dynamic pruning strategy. 
For each neighboring node $u \in Neigbor_{DP}(v)$, the neural-symbolic message $m^{(l)}$ passed to the variable node $v$ by $u$ at the $l^{th}$ layer can be computed using the encoding function $\varrho$, 
which depends on the information about the directed relation type and negation between the $u$ and $v$, as well as the neural and symbolic state of the node $u$ at the ${(l-1)}^{th}$ layer. 
After neural-symbolic message passing, we employ product fuzzy logic to aggregate the messages received by $v$ and update the symbolic state of $v$ at the $l^{th}$ layer: 
\begin{equation}
\begin{split}
    s_{v}^{(l)} = \mathcal{N}(m^{(l)}_{1}\circ \dots \circ m^{(l)}_{k_{v}}),
    \label{product conj}
\end{split}
\end{equation}
where $\circ$ is Hadmard product, $ m^{(l)}_{1},\dots,m^{(l)}_{k_{v}}$ denotes $k_{v}$ messages received by $v$, $k_{v} \ge 1$ represents the number of neighboring nodes in $Neigbor_{DP}(v)$. Each message $m_i^{(l)}$ can be computed with $\varrho$: 
\begin{equation}
\begin{split}
    m^{(l)}_{i} = \varrho (n_{u_i}^{(l-1)},s_{u_i}^{(l-1)},r_{u_{i}v},M_{r_{u_{i}v}},D_{u_{i}\to v},NI{u_{i}v}), \label{varrho-nv}
\end{split}
\end{equation}
where $D_{u_{i}\to v} \in \{D_{h\to t}, D_{t\to h}\}$ indicates the direction and $NI{u_{i}v}\in \{Pos, Neg\}$ is a negation indicator. 
While for the neural state $n_{v}^{(l)}$, we utilze the updated symbolic state $s_{v}^{(l)}$ to update it. 
Specifically, we form an entity set $\mathcal{V}_{nz}$ consisting of the entities corresponding to the non-zero elements in the fuzzy vector $s_{v}^{(l)}$. 
We then aggregate the embeddings of these entities weighted by their corresponding probabilities. 
\begin{equation}
\begin{split}
    n_{v}^{(l)}=\sum_{i=1}^{|\mathcal{V}_{nz}|} s_{v,i}^{(l)}E_{e_{i}},e_i \in \mathcal{V}_{nz}, \label{s2z}
\end{split}
\end{equation}
where $s_{v,i}^{(l)}$ is the corresponding probability of $e_i$ in $s_{v}^{(l)}$ and $E_{e_i}$ is the embedding of the entity $e_i$.

\subsubsection{How to Answer Complex Queries with Proposed NSMP}
\label{inference exp}

According to Definition \ref{def:def5}, a DNF query can be answered by solving all of its sub-conjunctive queries. 
For a given conjunctive query $Q$, we employ NSMP layers $L$ times to the query graph of $Q$, where $L$ is the depth of NSMP. 
Then, we use the symbolic state $s_{y}^{(L)}$ of the free variable node $y$ at the final layer, along with the corresponding neural state $n_{y}^{(L)}$, to obtain the probability of each entity being the answer, thereby retrieving the final answers: 
\begin{equation}
\begin{split}
    &p_{\mathcal{A[Q]}} =\lambda s_{y}^{(L)}\\ 
    &+ (1-\lambda)softmax ( \underset{\forall e\in \mathcal{V} }{concat}  ( cos(n_{y}^{(L)}, E_{e}) ) )
    , \label{ns_eva}
\end{split}
\end{equation}
where $\lambda$ is a hyperparameter that balances the influence of neural and symbolic representation and $cos(\cdot,\cdot)$ is the cosine similarity. 
Let $D$ denote the largest distance between the constant nodes and the free variable node.
According to \citep{wang2023logical}, $L$ should be larger than or equal to $D$ to ensure the free variable node successfully receives all messages from the constant nodes. 
Therefore, $L$ should dynamically change based on different types of conjunctive queries. 
In addition, since the pre-trained link predictor is frozen in our work, NSMP has no trainable parameters.

\begin{table*}[htbp]
  \centering
  \caption{MRR results of baselines and our model on BetaE datasets. The average score is calculated separately among positive and negative queries. Highlighted are the top \textcolor[rgb]{1, 0, 0}{\textbf{first}} and \textcolor[rgb]{0, 0, 1}{\textbf{second}} results.  }
  \resizebox{0.72\linewidth}{!}{
    \begin{tabular}{ccccccccccccccccc}
    \toprule
    \textbf{KG} & \textbf{Model} & \textbf{1p} & \textbf{2p} & \textbf{3p} & \textbf{2i} & \textbf{3i} & \textbf{pi} & \textbf{ip} & \textbf{2u} & \textbf{up} & \textbf{AVG.(P)} & \textbf{2in} & \textbf{3in} & \textbf{inp} & \textbf{pin} & \textbf{AVG.(N)} \\
    \midrule
    \multirow{10}[4]{*}{FB15k-237} & BetaE & 39.0  & 10.9  & 10.0  & 28.8  & 42.5  & 22.4  & 12.6  & 12.4  & 9.7   & 20.9  & 5.1   & 7.9   & 7.4   & 3.6   & 6.0 \\
          & CQD   & \textcolor[rgb]{1, 0, 0}{\textbf{46.7}} & 10.3  & 6.5   & 23.1  & 29.8  & 22.1  & 16.3  & 14.2  & 8.9   & 19.8  & 0.2   & 0.2   & 2.1   & 0.1   & 0.7 \\
          & FuzzQE & 42.8  & 12.9  & 10.3  & 33.3  & 46.9  & 26.9  & 17.8  & 14.6  & 10.3  & 24.0  & 8.5   & 11.6  & 7.8   & 5.2   & 8.3 \\
          & GNN-QE & 42.8  & \textcolor[rgb]{0, 0, 1}{\textbf{14.7}} & \textcolor[rgb]{0, 0, 1}{\textbf{11.8}} & \textcolor[rgb]{0, 0, 1}{\textbf{38.3}} & \textcolor[rgb]{1, 0, 0}{\textbf{54.1}} & \textcolor[rgb]{0, 0, 1}{\textbf{31.1}} & 18.9  & 16.2  & \textcolor[rgb]{1, 0, 0}{\textbf{13.4}} & \textcolor[rgb]{0, 0, 1}{\textbf{26.8}} & 10.0  & \textcolor[rgb]{0, 0, 1}{\textbf{16.8}} & \textcolor[rgb]{0, 0, 1}{\textbf{9.3}} & 7.2   & 10.8 \\
          & ENeSy & 44.7  & 11.7  & 8.6   & 34.8  & 50.4  & 27.6  & 19.7  & 14.2  & 8.4   & 24.5  & 10.1  & 10.4  & 7.6   & 6.1   & 8.6 \\
          & $\text{CQD}^\mathcal{A}$  & \textcolor[rgb]{1, 0, 0}{\textbf{46.7}} & 13.6  & 11.4  & 34.5  & 48.3  & 27.4  & \textcolor[rgb]{0, 0, 1}{\textbf{20.9}} & \textcolor[rgb]{1, 0, 0}{\textbf{17.6}} & 11.4  & 25.7  & \textcolor[rgb]{1, 0, 0}{\textbf{13.6}} & 16.8  & 7.9   & \textcolor[rgb]{1, 0, 0}{\textbf{8.9}} & \textcolor[rgb]{0, 0, 1}{\textbf{11.8}} \\
\cmidrule{2-17}          & \multicolumn{16}{l}{(Based on message passing)} \\
          & LMPNN & \textcolor[rgb]{0, 0, 1}{\textbf{45.9}} & 13.1  & 10.3  & 34.8  & 48.9  & 22.7  & 17.6  & 13.5  & 10.3  & 24.1  & 8.7   & 12.9  & 7.7   & 4.6   & 8.5 \\
          & CLMPT & 45.7  & 13.7  & 11.3  & 37.4  & 52.0  & 28.2  & 19.0  & 14.3  & 11.1  & 25.9  & 7.7   & 13.7  & 8.0   & 5.0   & 8.6 \\
          & NSMP  & \textcolor[rgb]{1, 0, 0}{\textbf{46.7}} & \textcolor[rgb]{1, 0, 0}{\textbf{15.1}} & \textcolor[rgb]{1, 0, 0}{\textbf{12.3}} & \textcolor[rgb]{1, 0, 0}{\textbf{38.7}} & \textcolor[rgb]{0, 0, 1}{\textbf{52.2}} & \textcolor[rgb]{1, 0, 0}{\textbf{31.2}} & \textcolor[rgb]{1, 0, 0}{\textbf{23.3}} & \textcolor[rgb]{0, 0, 1}{\textbf{17.2}} & \textcolor[rgb]{0, 0, 1}{\textbf{11.9}} & \textcolor[rgb]{1, 0, 0}{\textbf{27.6}} & \textcolor[rgb]{0, 0, 1}{\textbf{11.9}} & \textcolor[rgb]{1, 0, 0}{\textbf{17.6}} & \textcolor[rgb]{1, 0, 0}{\textbf{10.8}} & \textcolor[rgb]{0, 0, 1}{\textbf{7.9}} & \textcolor[rgb]{1, 0, 0}{\textbf{12.0}} \\
    \midrule
    \multirow{10}[4]{*}{NELL995} & BetaE & 53.0  & 13.0  & 11.4  & 37.6  & 47.5  & 24.1  & 14.3  & 12.2  & 8.5   & 24.6  & 5.1   & 7.8   & 10.0  & 3.1   & 6.5 \\
          & CQD   & \textcolor[rgb]{1, 0, 0}{\textbf{60.8}} & 18.3  & 13.2  & 36.5  & 43.0  & 30.0  & 22.5  & 17.6  & 13.7  & 28.4  & 0.1   & 0.1   & 4.0   & 0.0   & 1.1 \\
          & FuzzQE & 47.4  & 17.2  & 14.6  & 39.5  & 49.2  & 26.2  & 20.6  & 15.3  & 12.6  & 27.0  & 7.8   & 9.8   & 11.1  & 4.9   & 8.4 \\
          & GNN-QE & 53.3  & 18.9  & 14.9  & 42.4  & 52.5  & 30.8  & 18.9  & 15.9  & 12.6  & 28.9  & 9.9   & 14.6  & 11.4  & 6.3   & 10.6 \\
          & ENeSy & 59.0  & 18.0  & 14.0  & 39.6  & 49.8  & 29.8  & 24.8  & 16.4  & 13.1  & 29.4  & 11.3  & 8.5   & 11.6  & 8.6   & 10.0 \\
          & $\text{CQD}^\mathcal{A}$  & 60.4  & \textcolor[rgb]{1, 0, 0}{\textbf{22.9}} & \textcolor[rgb]{0, 0, 1}{\textbf{16.7}} & \textcolor[rgb]{0, 0, 1}{\textbf{43.4}} & \textcolor[rgb]{0, 0, 1}{\textbf{52.6}} & \textcolor[rgb]{0, 0, 1}{\textbf{32.1}} & \textcolor[rgb]{0, 0, 1}{\textbf{26.4}} & \textcolor[rgb]{1, 0, 0}{\textbf{20.0}} & \textcolor[rgb]{1, 0, 0}{\textbf{17.0}} & \textcolor[rgb]{0, 0, 1}{\textbf{32.3}} & \textcolor[rgb]{1, 0, 0}{\textbf{15.1}} & \textcolor[rgb]{1, 0, 0}{\textbf{18.6}} & \textcolor[rgb]{1, 0, 0}{\textbf{15.8}} & \textcolor[rgb]{1, 0, 0}{\textbf{10.7}} & \textcolor[rgb]{1, 0, 0}{\textbf{15.1}} \\
\cmidrule{2-17}          & \multicolumn{16}{l}{(Based on message passing)} \\
          & LMPNN & 60.6  & 22.1  & 17.5  & 40.1  & 50.3  & 28.4  & 24.9  & 17.2  & 15.7  & 30.7  & 8.5   & 10.8  & 12.2  & 3.9   & 8.9 \\
          & CLMPT & 58.9  & \textcolor[rgb]{0, 0, 1}{\textbf{22.1}} & \textcolor[rgb]{1, 0, 0}{\textbf{18.4}} & 41.8  & 51.9  & 28.8  & 24.4  & 18.6 & \textcolor[rgb]{0, 0, 1}{\textbf{16.2}} & 31.3  & 6.6   & 8.1   & 11.8  & 3.8   & 7.6 \\
          & NSMP  & \textcolor[rgb]{0, 0, 1}{\textbf{60.7}} & 21.6  & 17.5  & \textcolor[rgb]{1, 0, 0}{\textbf{44.2}} & \textcolor[rgb]{1, 0, 0}{\textbf{53.8}} & \textcolor[rgb]{1, 0, 0}{\textbf{33.7}} & \textcolor[rgb]{1, 0, 0}{\textbf{26.7}} & \textcolor[rgb]{0, 0, 1}{\textbf{19.1}}  & 14.4  & \textcolor[rgb]{1, 0, 0}{\textbf{32.4}} & \textcolor[rgb]{0, 0, 1}{\textbf{12.4}} & \textcolor[rgb]{0, 0, 1}{\textbf{15.7}} & \textcolor[rgb]{0, 0, 1}{\textbf{13.7}} & \textcolor[rgb]{0, 0, 1}{\textbf{7.8}} & \textcolor[rgb]{0, 0, 1}{\textbf{12.4}} \\
    \bottomrule
    \end{tabular}%
    }
  \label{main betae datasets}%
\end{table*}%

\subsection{Discussion on Computational Complexity}
\label{complexity}

In addition to the message-passing-based NSMP, all existing neural-symbolic methods adopt a step-by-step approach. Among these neural-symbolic methods, FIT \citep{yin2024rethinking} is the current SOTA method, which is a natural extension of QTO \citep{bai2023answering}. According to \citep{bai2023answering}, QTO outperforms previous neural-symbolic methods \citep{arakelyan2021complex, zhu2022neural} in both performance and efficiency. Consequently, FIT serves as the representative method for current step-by-step neural-symbolic approaches. 
In the following, we analyze the computational complexity of NSMP compared to FIT, highlighting why message-passing-based models enable more efficient inference than the step-by-step approach. Here, we focus on time complexity, while the analysis of space complexity is provided in App. \ref{space complexity}. 
\begin{proposition}
Disregarding sparsity, the time complexity of NSMP for any $\text{EFO}_1$ formula is approximately linear to $\mathcal{O}(|\mathcal{V}|^2)$. \label{prop:prop1}
\end{proposition}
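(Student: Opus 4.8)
The plan is to bound the running time of a single NSMP layer by accounting for each constituent operation, and then to multiply by the query-dependent but $|\mathcal{V}|$-independent number of messages and layers. First I would identify the dominant cost. The symbolic one-hop inference $\mu$ in \eqref{mu1}--\eqref{mu4} is built around the dense vector--matrix products $p_h M_r$ and $p_t M_r^{\top}$ (and their negated analogues). Disregarding sparsity, each such product multiplies a $1\times|\mathcal{V}|$ vector by a $|\mathcal{V}|\times|\mathcal{V}|$ matrix and therefore costs $\mathcal{O}(|\mathcal{V}|^2)$; the thresholded normalization $\mathcal{N}(\cdot)$ of \eqref{normalized function} touches each of the $|\mathcal{V}|$ coordinates once and is only $\mathcal{O}(|\mathcal{V}|)$. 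This quadratic matrix product is the single most expensive primitive in the framework, so the remaining task is to show that everything else is of no larger order.

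Next I would argue that the neural side contributes only lower-order terms. Evaluating the neural message function $\rho$ in \eqref{rho1}--\eqref{rho4} requires scoring every candidate in the search domain $\mathcal{R}_s\subseteq\mathcal{V}$ and taking an $\arg\max$; with embedding dimension $\delta$ this is $\mathcal{O}(|\mathcal{V}|\delta)$ for the scores and $\mathcal{O}(|\mathcal{V}|)$ for the selection. The conversion $f$ of \eqref{f func} likewise computes $|\mathcal{V}|$ similarities against the entity embeddings followed by a softmax, again $\mathcal{O}(|\mathcal{V}|\delta)$. Hence one complete neural--symbolic message $\varrho$ in \eqref{varrho1}--\eqref{varrho4} costs $\mathcal{O}(|\mathcal{V}|^2+|\mathcal{V}|\delta)$, which collapses to $\mathcal{O}(|\mathcal{V}|^2)$ once $\delta$ is treated as a constant (or, more weakly, whenever $\delta\le|\mathcal{V}|$). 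For the state update I would note that the product-fuzzy-logic aggregation in \eqref{product conj} performs $k_v$ Hadamard products over length-$|\mathcal{V}|$ vectors, costing $\mathcal{O}(k_v|\mathcal{V}|)$, while the neural re-embedding in \eqref{s2z} is a weighted sum of at most $|\mathcal{V}|$ embeddings, costing $\mathcal{O}(|\mathcal{V}|\delta)$; both are dominated by the quadratic term.

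I would then assemble the per-layer and per-query totals. In one layer the number of messages computed is at most twice the number of edges $|E|$ of the query graph, since each atomic formula may pass a message in either direction, and dynamic pruning can only remove messages and hence never increases this count; therefore one layer costs $\mathcal{O}(|E|\,|\mathcal{V}|^2)$. Running the $L$ layers and taking the union over the $d$ sub-conjunctive queries of the DNF multiplies this by the factor $L\cdot d$. Since $|E|$, $L$ (bounded below by the depth $D$ needed for the free variable to receive all constant messages) and $d$ are all determined by the query and independent of $|\mathcal{V}|$, the total cost for any $\text{EFO}_1$ formula is $\mathcal{O}(|\mathcal{V}|^2)$ up to a query-structure constant, which is the claim.

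The step I expect to be the main obstacle is making the treatment of the neural components rigorous, in particular justifying that the $\arg\max$ over $\mathcal{R}_s$ in $\rho$ and the all-entity similarity in $f$ remain at $\mathcal{O}(|\mathcal{V}|\delta)$ rather than something larger. This relies on the search domain being a subset of $\mathcal{V}$ and on the chosen link predictor admitting batched all-entity scoring through a single matrix multiply. The phrase ``approximately linear to $\mathcal{O}(|\mathcal{V}|^2)$'' is precisely the acknowledgement that these embedding-dimension-scaled terms are of lower order, so that the dense $p_h M_r$ product dictates the leading quadratic behaviour.
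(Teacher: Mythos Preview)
Your proposal is correct and follows essentially the same approach as the paper: identify the symbolic vector--matrix product in $\mu$ as the $\mathcal{O}(|\mathcal{V}|^2)$ bottleneck, then argue that all remaining components (the neural message $\rho$, the conversion $f$, the Hadamard aggregation, and the neural re-embedding) are of lower order because $d\ll|\mathcal{V}|$. Your treatment is in fact more detailed than the paper's, which simply asserts that the symbolic part is the bottleneck and does not explicitly itemize the $\mathcal{O}(|\mathcal{V}|d)$ and $\mathcal{O}(|\mathcal{V}|)$ terms or the query-structure constants $|E|$, $L$, $d$; one small remark is that your concern about the $\arg\max$ in $\rho$ is moot here, since the paper derives closed-form expressions for $\rho$ (see \eqref{app rho1}--\eqref{app rho4}) that avoid any search entirely.
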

The proof of the Proposition \ref{prop:prop1} is provided in App. \ref{proof1}. 
\begin{proposition}
For cyclic queries, NSMP offers more efficient inference than step-by-step FIT. \label{prop:prop2}
\end{proposition}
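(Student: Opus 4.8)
The plan is to settle the claim by comparing the asymptotic running times of the two methods on a cyclic query and showing that NSMP's cost is strictly dominated. First I would invoke Proposition \ref{prop:prop1}: because that bound holds for \emph{any} $\text{EFO}_1$ formula, it already certifies that NSMP answers a cyclic query in $\mathcal{O}(|\mathcal{V}|^2)$ time. The point worth stressing is \emph{why} cyclicity does not inflate this cost. Within each NSMP layer the only entity-dependent operations are the vector--matrix products $p\,M_r$ of Eqs.~\eqref{mu1}--\eqref{mu4} and the all-entity similarity computation in $f$ (Eq.~\eqref{f func}), each costing $\mathcal{O}(|\mathcal{V}|^2)$. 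The number of edges in the query graph and the depth $L$ are fixed by the query topology and are independent of $|\mathcal{V}|$, so a cycle merely reroutes messages through additional layers rather than enlarging the per-layer cost.

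The central step is to analyze FIT on cyclic queries and expose the extra factor of $|\mathcal{V}|$ that the step-by-step approach pays. I would argue that FIT resolves a query by eliminating existential variables one at a time against precomputed neural adjacency matrices. For a tree-shaped query a single elimination pass suffices, since removing a leaf variable couples it only to its unique parent, and each elimination is an $\mathcal{O}(|\mathcal{V}|^2)$ matrix operation. A cycle destroys this property: the variables along it are mutually dependent, so no elimination order can remove a variable while touching only one neighbor. To marginalize correctly, FIT must break the cycle by fixing one of its variables and enumerating that variable over all $|\mathcal{V}|$ candidate entities; for each fixed value the residual query is tree-like and costs $\mathcal{O}(|\mathcal{V}|^2)$, yielding $\mathcal{O}(|\mathcal{V}|^3)$ for a single cycle. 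For richer cyclic structures the number of entities that must be jointly enumerated only grows, so FIT's cost increases further.

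Combining the two estimates settles the claim: NSMP runs in $\mathcal{O}(|\mathcal{V}|^2)$ while FIT requires at least $\mathcal{O}(|\mathcal{V}|^3)$ on a cyclic query, so NSMP is asymptotically more efficient, with the gap widening as the cyclic structure becomes more intricate. I expect the main obstacle to be the second step---rigorously justifying that FIT's marginalization cannot avoid enumerating a cut variable. This is essentially a treewidth argument: one must show that the join induced by a cycle admits no width-one elimination ordering, so the computation genuinely requires reasoning about a joint assignment to at least two entities at once. I would formalize it by appealing to the query-graph structure of Definition \ref{def:def6} together with the elimination mechanics of QTO/FIT, and I would state the bound at the level of worst-case complexity, since heavy sparsity in $M_r$ can lower the practical constant without changing the asymptotic separation.
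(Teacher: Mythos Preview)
Your proposal is correct and follows essentially the same approach as the paper: invoke Proposition~\ref{prop:prop1} to bound NSMP at $\mathcal{O}(|\mathcal{V}|^2)$ regardless of cyclicity, observe that FIT must break a cycle by enumerating a variable over all $|\mathcal{V}|$ entities (yielding at least $\mathcal{O}(|\mathcal{V}|^3)$), and compare. The only difference is that the paper simply \emph{cites} \citep{yin2024rethinking} for FIT's $\mathcal{O}(|\mathcal{V}|^n)$ bound on cyclic queries rather than deriving it, whereas you sketch the treewidth/elimination justification yourself; your version is more self-contained, but the underlying argument is the same.
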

According to \citep{yin2024rethinking}, the complexity of FIT on cyclic queries is $\mathcal{O}(|\mathcal{V}|^n)$, where $n$ is the number of variables in the query graph. In contrast, the complexity of NSMP is approximately $\mathcal{O}(|\mathcal{V}|^2)$. This means that NSMP can provide more efficient inference on cyclic queries. The full proof is in App. \ref{proof2}. 
We also empirically verify Proposition \ref{prop:prop2}. As demonstrated in Sec. \ref{major}, NSMP achieves faster inference times than FIT on cyclic queries, with speedup ranging from 69$\times$ to over 150$\times$. 
\begin{proposition}
For acyclic queries, NSMP has better complexity and provides more efficient inference than FIT/QTO when considering sparsity. 
\label{prop:prop4}
\end{proposition}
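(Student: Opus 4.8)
The plan is to start from the per-edge accounting established in the proof of Proposition~\ref{prop:prop1} and show that, for an acyclic query graph, switching from a dense to a sparse treatment of the symbolic matrices collapses the dominant quadratic term for NSMP, whereas the object at the core of FIT/QTO is intrinsically dense and admits no such collapse.

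First I would isolate the per-edge cost of NSMP. From Proposition~\ref{prop:prop1}, a single neural-symbolic one-hop inference $\varrho$ (Eqs.~\eqref{varrho1}--\eqref{varrho4}) is composed of (i) the symbolic products $p M_r$ of Eqs.~\eqref{mu1}--\eqref{mu4}, which cost $\mathcal{O}(|\mathcal{V}|^2)$ when $M_r$ is dense, and (ii) the neural term $f(\rho(\cdot))$ of Eq.~\eqref{f func}, whose per-entity similarity and softmax cost $\mathcal{O}(|\mathcal{V}|\dim)$. Because an acyclic query graph is a tree, it has $\mathcal{O}(n)$ edges, and the depth $L$---hence the number of one-hop inferences triggered under the dynamic pruning of Sec.~\ref{DP}---is bounded independently of $|\mathcal{V}|$; thus the whole-query cost is a constant multiple of the per-edge cost.

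Second I would introduce sparsity and re-bound the symbolic term. The adjacency matrix $M_r\in\{0,1\}^{|\mathcal{V}|\times|\mathcal{V}|}$ is the observed graph, so $\mathrm{nnz}(M_r)=|\mathcal{E}_r|=\mathcal{O}(|\mathcal{V}|)$ for KGs of bounded average degree. The structural lever is the thresholded normalization $\mathcal{N}$ of Eq.~\eqref{normalized function}: it zeros every coordinate below $\epsilon$, so each symbolic state entering a one-hop product is sparse, and the product $p M_r$ then touches only the active rows and their out-neighborhoods, costing $\mathcal{O}(\mathrm{nnz})$ instead of $\mathcal{O}(|\mathcal{V}|^2)$. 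The softmax inside $f$ transiently densifies the vector, but that costs only $\mathcal{O}(|\mathcal{V}|)$ and is immediately re-sparsified by the outer $\mathcal{N}$ in $\varrho$. Treating $\dim$ as constant, the per-edge cost therefore drops to $\mathcal{O}(|\mathcal{V}|+\mathrm{nnz}(M_r))=\mathcal{O}(|\mathcal{V}|)$, and the acyclic-query cost to $\mathcal{O}(n|\mathcal{V}|)$, i.e.\ linear in $|\mathcal{V}|$.

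Third I would contrast FIT/QTO, which materialize a neural adjacency matrix with entries $\varphi(e_i,r,e_j)$; since a neural link predictor returns a nonzero truth value for essentially every pair, this matrix has $\Theta(|\mathcal{V}|^2)$ nonzeros and cannot be thresholded to sparsity without altering the answers. Each projection is thus $\Theta(|\mathcal{V}|^2)$ regardless of KG sparsity, and on an acyclic query with $\mathcal{O}(n)$ projections the total is $\Theta(n|\mathcal{V}|^2)$. Comparing $\mathcal{O}(n|\mathcal{V}|)$ against $\Theta(n|\mathcal{V}|^2)$ yields a factor-$|\mathcal{V}|$ separation, establishing both the asymptotic and the empirical efficiency claims. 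I expect the main obstacle to be the sparsity-preservation argument---formally certifying that the fuzzy states stay sparse across all $L$ layers, which rests on $\epsilon>0$ being fixed and on the observed graph having bounded degree---together with the dual claim that FIT/QTO's dense neural matrix is genuinely non-sparsifiable; a secondary subtlety is ensuring the transient softmax densification in $f$ does not inflate the per-edge bound beyond linear, which I would resolve by charging it to the $\mathcal{O}(|\mathcal{V}|)$ neural-scoring step already counted.
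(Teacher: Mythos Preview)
Your overall strategy---compare the sparsity of NSMP's relational adjacency matrix against that of FIT/QTO's neural adjacency matrix---matches the paper's, but the key step where you quantify FIT/QTO's cost is too strong and does not survive contact with how those methods actually work. You assert that the neural adjacency matrix ``has $\Theta(|\mathcal{V}|^2)$ nonzeros and cannot be thresholded to sparsity without altering the answers,'' and from this derive a clean factor-$|\mathcal{V}|$ separation. But FIT and QTO \emph{do} threshold their neural matrices in practice, so their per-projection cost is not $\Theta(|\mathcal{V}|^2)$ but rather $\mathcal{O}(|T^*(v_k)>0|\cdot \mathcal{L}_Q/|\mathcal{V}|)$ with $\mathcal{L}_Q<|\mathcal{V}|^2$. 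Your dense-versus-sparse dichotomy therefore collapses, and the $|\mathcal{V}|$-factor gap you claim is not supported.

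The paper's actual argument is more delicate: it grants that both NSMP's fuzzy vectors and QTO's vectors $T^*(v_k)$ are threshold-sparsified (and can be taken to have comparable support size $K$), and then compares only the adjacency matrices. The point is that NSMP's $M_r$ is $\{0,1\}$-valued, so its nonzero count $\mathcal{L}_N$ equals the number of observed edges, whereas QTO's neural matrix, even after thresholding, must retain strictly more nonzeros---matching $\mathcal{L}_N$ would require threshold $1$, which would eliminate QTO's neural reasoning entirely. Hence $\mathcal{L}_N<\mathcal{L}_Q$, and the per-edge costs satisfy $\mathcal{O}(K\cdot\mathcal{L}_N/|\mathcal{V}|)<\mathcal{O}(K\cdot\mathcal{L}_Q/|\mathcal{V}|)$. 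This yields a strict-inequality conclusion rather than your asymptotic separation. To repair your argument, drop the $\Theta(|\mathcal{V}|^2)$ claim for FIT/QTO and instead argue the inequality $\mathcal{L}_N<\mathcal{L}_Q$ directly from the binary-versus-continuous nature of the two matrices.
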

Both FIT and QTO sparsify the neural adjacency matrix they use by setting appropriate thresholds. However, the relational adjacency matrix employed in NSMP only contains $0$s and $1$s. Therefore, in terms of the adjacency matrix, NSMP theoretically exhibits a more efficient complexity compared to FIT/QTO. This results in NSMP providing more efficient inference on acyclic queries than FIT/QTO. The full proof is in App. \ref{proof4}. We also empirically verify Proposition \ref{prop:prop4}. As demonstrated in Sec. \ref{major}, NSMP achieves at least a 10$\times$ speedup in inference time for acyclic queries in NELL995.

\section{Experiments}

\subsection{Experimental Settings}
\label{experimental setting}

\textbf{Datasets and Queries.} We evaluate our model on two popular KGs: FB15k-237 \citep{toutanova2015observed} and NELL995 \citep{xiong2017deeppath}. 
We follow \citep{chen2022fuzzy,xu2022neural} and exclude FB15k \citep{bordes2013translating} since it suffers from major test leakage \citep{toutanova2015observed, rossi2021knowledge}. 
For a fair comparison with previous works, we evaluate our model using both the datasets introduced by \citep{ren2020beta}, which we refer to as the \textbf{BetaE} datasets, and the datasets proposed by \citep{yin2024rethinking}, which we refer to as the \textbf{FIT} datasets. 
In particular, 
according to \citep{yin2024rethinking}, the ``pni'' query in the BetaE datasets is answered as the universal quantifier version. \citet{yin2024rethinking} maintains the ``pni'' query type but re-samples the answers according to their own definition. Therefore, we follow \citep{yin2024rethinking} and only evaluate the ``pni'' query with re-sampled answers, as defined in the FIT datasets. 
For the graph representation of the query types and related statistics, please refer to App. \ref{More Details about the Datasets}. 

\textbf{Evaluation Protocol.} 
We follow the evaluation scheme proposed in \citep{ren2020query2box,ren2020beta}. Specifically, the answers to complex queries are categorized into two types: easy and hard. Easy answers can be found by directly traversing the KG, whereas hard answers require non-trivial reasoning to be derived, i.e., non-trivial answers. Since our evaluation focuses on incomplete KGs, we concentrate on the model's ability to discover hard answers. 
Each hard answer of a query is ranked against non-answer entities to compute the Mean Reciprocal Rank (MRR).

\textbf{Baselines.} We consider the state-of-the-art CQA models from recent years as our baselines, including BetaE \citep{ren2020beta}, CQD \citep{arakelyan2021complex}, LogicE \citep{luus2021logic}, ConE \citep{zhang2021cone}, FuzzQE \citep{chen2022fuzzy}, GNN-QE \citep{zhu2022neural}, ENeSy \citep{xu2022neural}, $\text{CQD}^\mathcal{A}$ \citep{arakelyan2023adapting}, LMPNN \citep{wang2023logical}, QTO \citep{bai2023answering}, CLMPT \citep{zhang2024conditional}, and FIT \citep{yin2024rethinking}, where LMPNN and CLMPT are based on message passing. We also compare more neural CQA models on the BetaE datasets in App. \ref{Comparison with More Neural CQA Models on BetaE Datasets}. For details about the model, implementation, and experiments, please refer to App. \ref{Details about the Model, Implementation and Experiments}. 

\begin{table}[htbp]
  \centering
  \caption{MRR results on FIT datasets. Highlighted are the top \textcolor[rgb]{1, 0, 0}{\textbf{first}} and \textcolor[rgb]{0, 0, 1}{\textbf{second}} results.  }
  \resizebox{1\linewidth}{!}{
    \begin{tabular}{ccccccccccccc}
    \toprule
    \textbf{KG} & \textbf{Model} & \textbf{pni} & \textbf{2il} & \textbf{3il} & \textbf{2m} & \textbf{2nm} & \textbf{3mp} & \textbf{3pm} & \textbf{im} & \textbf{3c} & \textbf{3cm} & \textbf{AVG} \\
    \midrule
    \multirow{10}[4]{*}{FB15k-237} & BetaE & 9.0   & 25.0  & 40.1  & 8.6   & 6.7   & 8.6   & 6.8   & 12.3  & 25.2  & 22.9  & 16.5 \\
          & LogicE & 9.5   & 27.1  & 42.0  & 8.6   & 6.7   & 9.4   & 6.1   & 12.8  & 25.4  & 23.3  & 17.1 \\
          & ConE  & 10.8  & 27.6  & 43.9  & \textcolor[rgb]{0, 0, 1}{\textbf{9.6}} & 7.0   & 9.3   & 7.3   & 14.0  & 28.2  & 24.9  & 18.3 \\
          & QTO   & 12.1  & 28.9  & 47.9  & 8.5   & \textcolor[rgb]{0, 0, 1}{\textbf{10.7}} & \textcolor[rgb]{0, 0, 1}{\textbf{11.4}} & 6.5   & 17.9  & 38.3  & \textcolor[rgb]{0, 0, 1}{\textbf{35.4}} & 21.8 \\
          & CQD   & 7.7   & 29.6  & 46.1  & 6.0   & 1.7   & 6.8   & 3.3   & 12.3  & 25.9  & 23.8  & 16.3 \\
          & FIT   & \textcolor[rgb]{1, 0, 0}{\textbf{14.9}} & \textcolor[rgb]{1, 0, 0}{\textbf{34.2}} & \textcolor[rgb]{1, 0, 0}{\textbf{51.4}} & \textcolor[rgb]{1, 0, 0}{\textbf{9.9}} & \textcolor[rgb]{1, 0, 0}{\textbf{12.7}} & \textcolor[rgb]{1, 0, 0}{\textbf{11.9}} & \textcolor[rgb]{1, 0, 0}{\textbf{7.7}} & \textcolor[rgb]{1, 0, 0}{\textbf{19.6}} & \textcolor[rgb]{1, 0, 0}{\textbf{39.4}} & \textcolor[rgb]{1, 0, 0}{\textbf{37.3}} & \textcolor[rgb]{1, 0, 0}{\textbf{23.9}} \\
\cmidrule{2-13}          & \multicolumn{12}{l}{(Based on message passing)} \\
          & LMPNN & 10.7  & 28.7  & 42.1  & 9.4   & 4.2   & 9.8   & 7.2   & 15.4  & 25.3  & 22.2  & 17.5 \\
          & CLMPT & 10.1  & 31.0  & 48.5  & 8.7   & 7.8   & 10.1  & 6.1   & 15.8  & 30.2  & 28.5  & 19.7 \\
          & NSMP  & \textcolor[rgb]{0, 0, 1}{\textbf{13.4}} & \textcolor[rgb]{0, 0, 1}{\textbf{32.9}} & \textcolor[rgb]{0, 0, 1}{\textbf{51.2}} & 9.2   & 9.9   & \textcolor[rgb]{0, 0, 1}{\textbf{11.4}} & \textcolor[rgb]{0, 0, 1}{\textbf{7.5}} & \textcolor[rgb]{0, 0, 1}{\textbf{18.9}} & \textcolor[rgb]{0, 0, 1}{\textbf{39.0}} & 34.5  & \textcolor[rgb]{0, 0, 1}{\textbf{22.8}} \\
    \midrule
    \multirow{10}[4]{*}{NELL995} & BetaE & 7.5   & 43.3  & 64.6  & 29.0  & 5.3   & 8.7   & 14.4  & 29.5  & 36.1  & 33.7  & 27.2 \\
          & LogicE & 9.8   & 47.0  & 66.6  & 34.7  & 6.4   & 13.3  & 17.8  & 35.1  & 38.9  & 37.9  & 30.8 \\
          & ConE  & 10.3  & 42.1  & 65.8  & 32.4  & 7.0   & 12.6  & 16.8  & 34.4  & 40.2  & 38.2  & 30.0 \\
          & QTO   & 12.3  & 48.5  & 68.2  & \textcolor[rgb]{0, 0, 1}{\textbf{38.8}} & \textcolor[rgb]{0, 0, 1}{\textbf{12.3}} & \textcolor[rgb]{0, 0, 1}{\textbf{22.8}} & 19.3  & 41.1  & 45.4  & \textcolor[rgb]{0, 0, 1}{\textbf{43.9}} & 35.3 \\
          & CQD   & 7.9   & 48.7  & 68.0  & 31.7  & 1.5   & 12.9  & 13.8  & 33.9  & 38.8  & 35.9  & 29.3 \\
          & FIT   & \textcolor[rgb]{1, 0, 0}{\textbf{14.4}} & \textcolor[rgb]{1, 0, 0}{\textbf{53.3}} & \textcolor[rgb]{0, 0, 1}{\textbf{69.5}} & \textcolor[rgb]{1, 0, 0}{\textbf{42.1}} & \textcolor[rgb]{1, 0, 0}{\textbf{12.5}} & \textcolor[rgb]{1, 0, 0}{\textbf{24.0}} & \textcolor[rgb]{1, 0, 0}{\textbf{22.8}} & \textcolor[rgb]{0, 0, 1}{\textbf{41.5}} & \textcolor[rgb]{1, 0, 0}{\textbf{47.5}} & \textcolor[rgb]{1, 0, 0}{\textbf{45.3}} & \textcolor[rgb]{1, 0, 0}{\textbf{37.3}} \\
\cmidrule{2-13}          & \multicolumn{12}{l}{(Based on message passing)} \\
          & LMPNN & 11.6  & 43.9  & 62.3  & 35.6  & 6.2   & 15.9  & 19.3  & 38.3  & 39.1  & 34.4  & 30.7 \\
          & CLMPT & 12.5  & 48.7  & 68.2  & 36.6  & 7.5   & 19.0  & \textcolor[rgb]{0, 0, 1}{\textbf{19.9}} & 39.1  & 44.4  & 41.2  & 33.7 \\
          & NSMP  & \textcolor[rgb]{0, 0, 1}{\textbf{13.0}} & \textcolor[rgb]{0, 0, 1}{\textbf{52.4}} & \textcolor[rgb]{1, 0, 0}{\textbf{71.3}} & 37.6  & 11.5  & 21.7  & 18.3  & \textcolor[rgb]{1, 0, 0}{\textbf{41.7}} & \textcolor[rgb]{0, 0, 1}{\textbf{46.6}} & 42.4  & \textcolor[rgb]{0, 0, 1}{\textbf{35.7}} \\
    \bottomrule
    \end{tabular}%
    }
  \label{main fit datasets}%
\end{table}%

\subsection{Major Results}
\label{major}

Table \ref{main betae datasets} and Table \ref{main fit datasets} present the results of NSMP compared to neural and neural-symbolic CQA baselines on the BetaE and FIT datasets, respectively. 
It can be observed that NSMP outperforms other message-passing-based CQA models on both positive and negative queries, even without training on complex queries, and achieves a significant improvement on negative queries. 
For other baselines, NSMP outperforms most neural and neural-symbolic CQA models, achieving a strong performance. 
Specifically, although NSMP does not significantly outperform $\text{CQD}^\mathcal{A}$ on NELL995, NSMP has the advantage of not requiring training on complex queries, whereas $\text{CQD}^\mathcal{A}$ requires training on complex queries. A more detailed discussion validating the superiority of NSMP over $\text{CQD}^\mathcal{A}$ is provided in App. \ref{comparison with cqda}. 
Despite the achievement of second-best results compared to the state-of-the-art neural-symbolic model FIT, as discussed in Sec. \ref{complexity}, NSMP can offer superior inference efficiency. 
Specifically, we evaluate the relative speedup of NSMP over FIT in terms of inference time on FIT datasets to make a sharp contrast, as illustrated in Fig. \ref{figure3}. 
NSMP demonstrates faster inference times across all query types on both FB15k-237 and NELL995, with speedup ranging from 2$\times$ to over 150$\times$. 
Notably, for more complex cyclic queries, such as ``3c'' and ``3cm'', the relative speedup of NSMP becomes even more pronounced, as discussed in Sec. \ref{complexity}. 
Moreover, on the larger knowledge graph NELL995, NSMP exhibits even greater relative speedup, suggesting that NSMP offers better scalability compared to FIT. More experimental details on inference time can be found in App. \ref{Details about the Model, Implementation and Experiments}. In addition, We also provide a case study in App. \ref{case study}.

\begin{figure}
    \centering
    \includegraphics[width=0.8\linewidth]{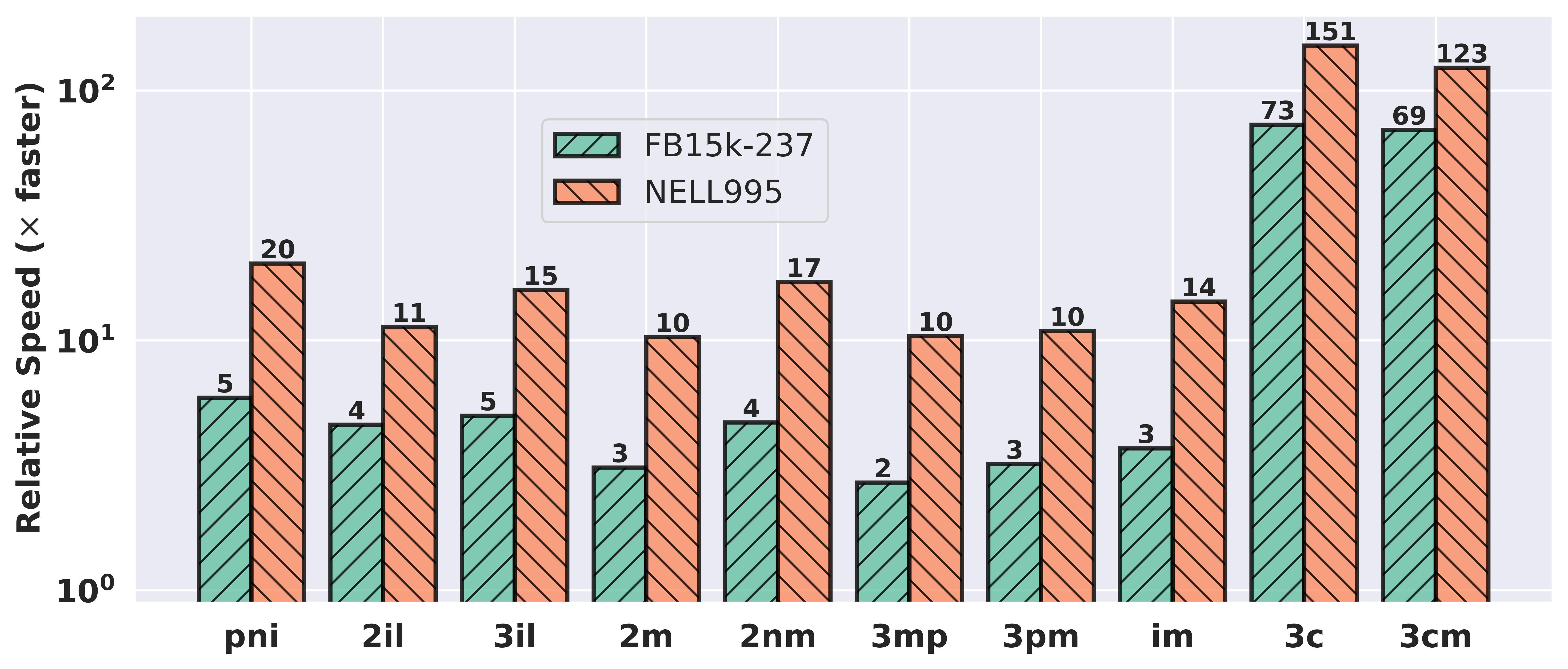}
    \caption{Relative speedup of NSMP over FIT in terms of inference time on FIT datasets. }
    \label{figure3}
\end{figure}

\subsection{Ablation Study}
\label{ablation}

We first explore the effect of the hyperparameter $L$ on performance. 
As noted in Sec. \ref{inference exp}, $L$ should be greater than or equal to $D$. 
Thus, we evaluate the performance for depths $D$, $D+1$, $D+2$, and $D+3$ on FB15k-237. 
As shown in Table \ref{depth}, NSMP achieves the best average performance when $L=D+1$. 
However, we observe that other depth choices can yield better results for specific query types. 
One approach is to manually select the most appropriate depth $L$ for different query types to achieve better average performance. As indicated in Table \ref{depth}, this manual choice achieves an average MRR result of $23.3$. 
But for simplicity, we set $L=D+1$ by default in our work.

\begin{table}[htbp]
    \centering
    \caption{MRR results of NSMP with different layers on FB15k-237. }
    \resizebox{0.9\linewidth}{!}{
    \begin{tabular}{cccccccccccc}
    \toprule
    \textbf{Model} & \textbf{pni} & \textbf{2il} & \textbf{3il} & \textbf{2m} & \textbf{2nm} & \textbf{3mp} & \textbf{3pm} & \textbf{im} & \textbf{3c} & \textbf{3cm} & \textbf{AVG} \\
    \midrule
   $ L=D$   & 11.9  & 29.1  & 49.2  & \textbf{9.2} & 9.9   & 11.4  & \textbf{7.5} & \textbf{18.9} & \textbf{40.0} & \textbf{37.9} & 22.5 \\
    $L=D+1$ & \textbf{13.4} & \textbf{32.9} & \textbf{51.2} & \textbf{9.2} & 9.9   & 11.4  & \textbf{7.5} & \textbf{18.9} & 39.1  & 34.5  & \textbf{22.8} \\
    $L=D+2$ & 12.7  & \textbf{32.9} & \textbf{51.2} & 7.9   & \textbf{10.5} & \textbf{11.5} & 5.6   & 16.8  & 38.5  & 33.5  & 22.1 \\
    $L=D+3$ & 13.3  & 31.8  & 50.2  & 7.9   & \textbf{10.5} & \textbf{11.5} & 5.6   & 16.8  & 38.1  & 31.8  & 21.8 \\
    \midrule
    Manual Choice & \textbf{13.4} & \textbf{32.9} & \textbf{51.2} & \textbf{9.2} & \textbf{10.5} & \textbf{11.5} & \textbf{7.5} & \textbf{18.9} & \textbf{40.0} & \textbf{37.9} & \textbf{23.3} \\
    \bottomrule
    \end{tabular}%
    }
    \label{depth}
\end{table}

To verify the effectiveness of the proposed dynamic pruning strategy, we conduct experiments on whether to perform dynamic pruning, and the results are shown in Table \ref{dp experiments}, where ``w/o DP'' indicates ``without using dynamic pruning'', AVG.(P), AVG.(N), and AVG.(F) represent the average scores for positive queries, negative queries on BetaE datasets, and the average scores on FIT datasets, respectively. 
It is found that the dynamic pruning strategy brings significant performance improvement, which shows the effectiveness of the strategy, indicating that the message from the variable node whose state is not updated is an unnecessary noise. 
In particular, it is observed that the dynamic pruning strategy achieves relatively higher performance gains on FIT datasets than on BetaE datasets. We believe this is because dynamic pruning is designed to filter out unnecessary noisy messages between variable nodes. Consequently, the number of variable nodes and their interactions in a query graph significantly influence the performance gains achievable through dynamic pruning. 
As illustrated in Fig. \ref{figure4}, \ref{figure5} of App. \ref{More Details about the Datasets}, the query types in FIT datasets involve more variable nodes compared to those in BetaE datasets. Moreover, the interactions between variable nodes are also more frequent in FIT queries, such as in ``3c" and ``3cm" queries. As a result, during message passing, query types in FIT datasets exhibit more noisy messages between variable nodes than in BetaE datasets. This explains why the dynamic pruning strategy achieves relatively higher performance gains on the FIT datasets. 
Furthermore, dynamic pruning further avoids unnecessary computation of noisy messages, thereby enhancing efficiency to a certain extent. Empirical results presented in App. \ref{dp efficiency} validate that dynamic pruning contributes to improved efficiency. 

For the discussion of other hyperparameters, please refer to App. \ref{Analysis on More Hyperparameters}. 

\begin{table}[htbp]
    \centering
    \caption{Average MRR results of NSMP with or without dynamic pruning. }
    \resizebox{0.6\linewidth}{!}{
    \begin{tabular}{ccccc}
    \toprule
    \textbf{KG} & \textbf{Model} & \textbf{AVG.(P)} & \textbf{AVG.(N)} & \textbf{AVG.(F)} \\
    \midrule
    \multirow{2}[2]{*}{FB15k-237} & NSMP w/o DP & 27.0  & 11.6  & 20.4 \\
          & NSMP  & \textbf{27.6} & \textbf{12.0} & \textbf{22.8} \\
    \midrule
    \multirow{2}[2]{*}{NELL995} & NSMP w/o DP & 32.1      & 12.1      & 31.2 \\
          & NSMP  & \textbf{32.4} & \textbf{12.4} & \textbf{35.7} \\
    \bottomrule
    \end{tabular}%
    }
    \label{dp experiments}
\end{table}

\section{Conclusion}

In this paper, we propose NSMP, a neural-symbolic message passing framework, to answer complex queries over KGs. 
By integrating neural and symbolic reasoning, NSMP can utilize fuzzy logic theory to answer arbitrary $\text{EFO}_1$ queries without the need for training on complex query datasets, while also offering interpretability through fuzzy sets. Moreover, we introduce a novel dynamic pruning strategy to filter out unnecessary noisy messages during message passing. In our ablation study, we validate the effectiveness of this strategy. Extensive experimental results demonstrate that NSMP outperforms other message passing CQA models, achieving a strong performance. 
Furthermore, we demonstrate the superiority of the message-passing-based NSMP in inference time over step-by-step neural-symbolic approaches through computational complexity analysis and empirical verification.

\section*{Impact Statement}
This paper aims to contribute to the advancement of knowledge graph reasoning, a rapidly evolving field with significant potential. While our research inevitably carries societal implications, we believe there are no specific issues that warrant particular attention in this discussion.

\balance


\bibliography{example_paper}
\bibliographystyle{icml2025}

\newpage
\appendix
\onecolumn

\section{Further Discussion for Related Work}
\label{further related work}

In the last decade and beyond, neural link predictors \citep{bordes2013translating, yang2015embedding,trouillon2016complex,dettmers2018convolutional,sun2018rotate} have been proposed to embed entities and relations into low-dimensional spaces to perform one-hop reasoning on incomplete KGs. 
Motivated by the success of neural link predictors and advancements in deep learning on sets \citep{zaheer2017deep}, \citet{hamilton2018embedding} proposed representing entity sets using low-dimensional embeddings, thereby answering complex queries in a step-by-step manner on incomplete KGs. This approach has since inspired a considerable body of work \citep{ren2020query2box,ren2020beta, zhang2021cone,choudhary2021probabilistic,bai2022query2particles,chen2022fuzzy,wang2023wasserstein}, with various methods utilizing different forms of vectors to represent entity sets. In addition, benefiting from the rapid progress of graph neural networks \citep{kipf2016semi, velivckovic2017graph} and transformers \citep{vaswani2017attention, ying2021transformers}, some studies \citep{daza2020message,liu2022mask,bai2023sequential} have represented complex queries as graphs or sequences, encoding the entire query at once rather than in a step-by-step manner. However, despite the effectiveness of the aforementioned neural models in handling complex queries on incomplete KGs by representing entity sets as neural embeddings, their performance on one-hop atomic queries remains limited.

To this end, \citet{wang2023logical} proposed a logical message passing model, which is most related to our work. 
This model leverages the pre-trained neural link predictor to infer intermediate embeddings for nodes in a query graph, interpreting these embeddings as logical messages. 
Through message passing paradigm \citep{xu2018how}, the free variable node embedding is updated to retrieve the answers. 
While effective on both one-hop atomic and multi-hop complex queries, logical message passing ignores the difference between constant and variable nodes, thus introducing noisy messages. To mitigate this, recent work \citep{zhang2024conditional} proposed a conditional message passing mechanism, which can be viewed as a pruning strategy regardless of the messages passed by variable nodes to constant nodes. 
However, this pruning strategy overlooks the noisy messages between variable nodes at the initial layers of message passing. 
In contrast, the dynamic pruning strategy employed in our model effectively eliminates these unnecessary noisy messages. 
Moreover, our proposed NSMP only needs to re-use a simple pre-trained neural link predictor without requiring training on complex query datasets, and offers interpretability through fuzzy sets.

In addition to the models mentioned above, there are several neural models enhanced with symbolic reasoning \citep{DBLP:conf/nips/SunAB0C20,arakelyan2021complex,zhu2022neural,xu2022neural,arakelyan2023adapting,bai2023answering} related to our work. 
These neural-symbolic models typically integrate neural link predictors with fuzzy logic to answer complex queries. 
Most of them depend on the operator tree \citep{wang2022benchmarking}, which, as noted in \citep{yin2024rethinking}, can only handle the existential first order logic formulas in an approximate way. 
In contrast, our proposed neural-symbolic model, which utilizes the query graph, enables a more natural and direct handling of these formulas.  
A recently proposed neural-symbolic model \citep{yin2024rethinking} introduces an algorithm that cuts nodes and edges step by step to handle the query graph. 
This model achieves SOTA performance but suffers from inefficiency. In contrast, our message-passing-based approach provides more efficient inference while achieving competitive performance, as discussed in Section \ref{complexity} and Section \ref{major}.

\section{Missing Proofs in the Main Paper}
\label{proof}

\subsection{Proof of Proposition \ref{prop:prop1}}
\label{proof1}

\begin{proof}
Since the computational bottleneck of NSMP lies in the symbolic-related parts, we focus on analyzing the time complexity of this aspect. 
According to Equation \ref{mu1}, \ref{mu2}, \ref{mu3}, \ref{mu4}, the complexity of symbolic one-hop inference is $\mathcal{O}(|\mathcal{V}|^2)$, so the complexity of neural-symbolic message encoding function $\varrho$ is approximately $\mathcal{O}(|\mathcal{V}|^2)$. This means that the complexity of message computation during message passing is approximately linear to $\mathcal{O}(|\mathcal{V}|^2)$. For the node state update process, symbolic state update and neural state update are involved, corresponding to Equation \ref{product conj}, \ref{s2z}, respectively. The complexity of the symbolic node state update is linear to $\mathcal{O}(|\mathcal{V}|)$, while the complexity of the neural node state update is $\mathcal{O}(|\mathcal{V}|d)$, where $d$ is the embedding dimension. 
Since we have $d\ll |\mathcal{V}|$, the total computational complexity of NSMP is approximately linear to $\mathcal{O}(|\mathcal{V}|^2)$. 
\end{proof}

\subsection{Proof of Proposition \ref{prop:prop2}}
\label{proof2}

\begin{proof}
According to \citep{yin2024rethinking}, FIT solves the acyclic query by continuously removing constants and leaf nodes, and the complexity of this process is approximately linear to $\mathcal{O}(|\mathcal{V}|^2)$. However, for cyclic queries, FIT needs to enumerate one variable within the cycle as a constant node, so the complexity is $\mathcal{O}(|\mathcal{V}|^n)$, where $n$ is the number of variables in the query graph. In contrast, the complexity of NSMP on cyclic queries is approximately linear to $\mathcal{O}(|\mathcal{V}|^2)$ (see Proposition \ref{prop:prop1}).
Cyclic queries are typically more complex and involve several variables, such as the ``3c'' and ``3cm'' query types proposed in \citep{yin2024rethinking}, which have three variables. In this case, the complexity of FIT is $\mathcal{O}(|\mathcal{V}|^3)>\mathcal{O}(|\mathcal{V}|^2)$. 
This means that NSMP can provide more efficient inference on cyclic queries. 
\end{proof}

\subsection{Proof of Proposition \ref{prop:prop4}}
\label{proof4}
We first analyze the time complexity of NSMP when considering sparsity. 
\begin{proposition}
Considering sparsity, the time complexity of NSMP is approximately $\mathcal{O}(K \cdot \frac{\mathcal{L}_N}{|\mathcal{V}|})<\mathcal{O}(|\mathcal{V}|^2)$, where $K$ is the number of non-zero elements in the sparse fuzzy vector and $\mathcal{L}_N$ is the total number of non-zero elements in the sparse relational adjacency matrix. 
\label{prop:prop3}
\end{proposition}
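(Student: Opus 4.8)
The plan is to reduce the entire per-layer cost of NSMP, in the sparse regime, to the cost of the sparse vector--matrix products that drive the symbolic one-hop inference function $\mu$, and then to bound that cost in terms of the sparsity parameters $K$ and $\mathcal{L}_N$. By the accounting behind Proposition \ref{prop:prop1}, the dominant operations are the products $p M_r$ and $p M_{r}^{\top}$ appearing in Equations \ref{mu1}--\ref{mu4}; the remaining work---the thresholded normalization $\mathcal{N}$ of Equation \ref{normalized function}, the product-logic aggregation of Equation \ref{product conj}, and the neural state update of Equation \ref{s2z}---is at most linear in the number of non-zero coordinates it touches, hence subsumed by the product cost once sparsity is shown to persist. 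So the first step is to isolate these products as the bottleneck under sparsity.

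Second, I would bound the cost of a single sparse product $p M_r$. The fuzzy vector $p$ has only $K$ non-zero coordinates; for each such coordinate indexed by an entity $e_i$, the product accesses row $i$ of $M_r$ and accumulates its non-zero entries. Summing over the $K$ active rows and using $\mathcal{L}_N/|\mathcal{V}|$ as the average number of non-zero entries per row of $M_r$ yields an amortized cost of $\mathcal{O}\!\left(K \cdot \frac{\mathcal{L}_N}{|\mathcal{V}|}\right)$ per product. The same bound applies to the transposed product $p M_{r}^{\top}$, and multiplying by the (constant, query-dependent) number of edges per layer does not alter the asymptotic form.

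Third, I would close the inequality $\mathcal{O}\!\left(K \cdot \frac{\mathcal{L}_N}{|\mathcal{V}|}\right) < \mathcal{O}(|\mathcal{V}|^2)$ by substituting the crude worst-case bounds $K \le |\mathcal{V}|$ and $\mathcal{L}_N \le |\mathcal{V}|^2$, which already give $K\,\mathcal{L}_N/|\mathcal{V}| \le |\mathcal{V}|^2$. The strict gap then follows from genuine sparsity: real knowledge graphs satisfy $\mathcal{L}_N \ll |\mathcal{V}|^2$, and the thresholded normalization $\mathcal{N}$ keeps $K \ll |\mathcal{V}|$ by zeroing every coordinate below $\epsilon$, so the product $K\,\mathcal{L}_N/|\mathcal{V}|$ stays well below $|\mathcal{V}|^2$.

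The main obstacle is showing that sparsity does not degrade as messages propagate across layers, so that $K$ may legitimately be treated as small throughout. The delicate case is the negation inference of Equations \ref{mu3}--\ref{mu4}, where subtracting $p M_r$ from the constant $\alpha/|\mathcal{V}|$ densifies the intermediate vector; I would argue that the subsequent application of $\mathcal{N}$ re-sparsifies it, controlling $K$ at the next layer, so that the positive-edge products govern the amortized cost. A secondary subtlety is that $\mathcal{L}_N/|\mathcal{V}|$ is only the average row density, making the per-product bound amortized rather than worst-case; I would note that this is precisely the sense in which the stated complexity is \emph{approximate}, consistent with the informal accounting used in Proposition \ref{prop:prop1}.
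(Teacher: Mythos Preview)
Your proposal is correct and follows essentially the same approach as the paper: identify the sparse vector--matrix product inside $\mu$ as the computational bottleneck, bound its cost as $K$ active rows times the average row density $\mathcal{L}_N/|\mathcal{V}|$, and invoke the thresholded normalization $\mathcal{N}$ to justify that the fuzzy vector stays sparse. You are in fact more careful than the paper's own argument, which does not separately address the densification caused by negation in Equations~\ref{mu3}--\ref{mu4} or the amortized nature of the average-row-density bound.
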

\begin{proof}
The symbolic one-hop inference function $\mu$ mainly involves sparse matrix multiplication between fuzzy vectors (or one-hot vectors) and the relational adjacency matrix. 
Due to the sparsity of the KG, the adjacency matrix, which contains only $0$s and $1$s, is extremely sparse. The complexity of the matrix can be represented as $\frac{\mathcal{L}_N}{|\mathcal{V}|}$, where $\mathcal{L}_N<|\mathcal{V}|^2$ is the total number of non-zero elements in the sparse relational adjacency matrix. In this case, $\frac{\mathcal{L}_N}{|\mathcal{V}|}$ can be interpreted as the average number of non-zero elements per row in this matrix.
In addition, since we adopt a thresholded normalization function (see Equation \ref{normalized function}), the fuzzy vector is also sparse. Therefore, 
the time complexity of $\mu$ is $\mathcal{O}(K \cdot \frac{\mathcal{L}_N}{|\mathcal{V}|})$, where $K$ is the number of non-zero elements in the sparse fuzzy vector. 
Since the computation of $\mu$ is the computational bottleneck of NSMP, the time complexity of NSMP is approximately $\mathcal{O}(K \cdot \frac{\mathcal{L}_N}{|\mathcal{V}|})$ while considering sparsity. 
\end{proof}
Next, we provide the proof of Proposition \ref{prop:prop4} based on Proposition \ref{prop:prop3}. 
\begin{proof}
FIT is a natural extension of QTO, and the time complexity of these two methods is equivalent for acyclic queries that include existential variables \citep{yin2024rethinking}. Therefore, we focus on analyzing the time complexity of NSMP and QTO on these acyclic queries while considering sparsity. According to \citep{bai2023answering}, due to the sparsity of the KG, the time complexity of QTO is actually $\mathcal{O}(|T^*(v_k)>0| \cdot \frac{\mathcal{L}_Q}{|\mathcal{V}|})$, where $T^*(v_k)$ is a sparse vector, $\mathcal{L}_Q$ is the total number of non-zero elements in the sparse neural adjacency matrix of QTO. 
For NSMP, the time complexity is approximately $\mathcal{O}(K \cdot \frac{\mathcal{L}_N}{|\mathcal{V}|})$ (see Proposition \ref{prop:prop3}). Since the sparsity of the fuzzy vector in NSMP and the vector $T^*(v_k)$ in QTO are both determined by the chosen threshold, we can assume that with appropriately set hyperparameters, $K = |T^*(v_k)>0|$. For $\mathcal{L}_Q$, the neural adjacency matrix in QTO can be sparsified by setting an appropriate threshold such that $\mathcal{L}_Q < |\mathcal{V}|^2$. However, the relational adjacency matrix used in NSMP only contains $0$s and $1$s. In theory, the neural adjacency matrix in QTO only achieves the same sparsity as the relational adjacency matrix used in NSMP when the threshold is set to $1$. However, setting the threshold to $1$ in QTO would make no sense, as it would strip the model of its neural reasoning capability. This suggests that, in terms of the adjacency matrix, NSMP theoretically has a more efficient computational complexity compared to QTO, with $\frac{\mathcal{L}_N}{|\mathcal{V}|} < \frac{\mathcal{L}_Q}{|\mathcal{V}|} < |\mathcal{V}|$. Thus, when considering sparsity, NSMP has a better computational complexity for acyclic queries compared to FIT and QTO.
\end{proof}

\section{Analysis on Space Complexity}
\label{space complexity}
Similar to the time complexity analysis in Section \ref{complexity}, for the space complexity analysis, we still focus on the comparison with FIT \citep{yin2024rethinking}. 
\begin{proposition}
Considering sparsity, NSMP achieves an approximate space complexity of $\mathcal{O}((|\mathcal{V}|+|\mathcal{R}|)d+|\mathcal{R}|\cdot \mathcal{L}_N)$,
with $d$ representing the embedding dimension and achieving superior space complexity compared to FIT. 
\label{prop:prop5}
\end{proposition}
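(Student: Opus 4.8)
The plan is to analyze the space complexity of NSMP by identifying what must be stored and comparing each component against FIT. First I would separate the storage requirements into two categories: the neural representations and the symbolic representations. For the neural side, we must store the pre-trained embeddings for all entities and relations, which accounts for $\mathcal{O}((|\mathcal{V}|+|\mathcal{R}|)d)$ space, where $d$ is the embedding dimension. This term is unavoidable and is shared with any embedding-based method, so it is not where the advantage lies. For the symbolic side, the dominant cost is storing the relational adjacency matrices $M_r$ for each relation $r \in \mathcal{R}$. Here I would invoke the sparsity argument already established in the proof of Proposition \ref{prop:prop3}: each adjacency matrix, containing only $0$s and $1$s, is extremely sparse, so storing it in sparse format requires only $\mathcal{O}(\mathcal{L}_N)$ space rather than $\mathcal{O}(|\mathcal{V}|^2)$, giving a total of $\mathcal{O}(|\mathcal{R}| \cdot \mathcal{L}_N)$ across all relations. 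The intermediate fuzzy vectors $p_v$ and node states are bounded by $\mathcal{O}(|\mathcal{V}|)$ per node and are dominated by the above terms, so they can be absorbed. Summing these yields the claimed $\mathcal{O}((|\mathcal{V}|+|\mathcal{R}|)d+|\mathcal{R}|\cdot \mathcal{L}_N)$.

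For the comparison with FIT, I would argue component by component that each term in NSMP's space complexity is no larger than the corresponding term in FIT. The key leverage point is again the adjacency matrix: FIT sparsifies its \emph{neural} adjacency matrix by thresholding, producing $\mathcal{L}_Q$ non-zero entries, whereas NSMP uses a purely binary relational adjacency matrix with $\mathcal{L}_N$ non-zero entries. As established in the proof of Proposition \ref{prop:prop4}, we have $\mathcal{L}_N < \mathcal{L}_Q$, since the neural matrix can only attain the sparsity of the binary matrix in the degenerate case where the threshold equals $1$ (which would destroy FIT's neural reasoning). Hence NSMP's symbolic storage term $\mathcal{O}(|\mathcal{R}|\cdot\mathcal{L}_N)$ is strictly smaller than the analogous term in FIT. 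The embedding term $\mathcal{O}((|\mathcal{V}|+|\mathcal{R}|)d)$ is common to both, since both rely on the same class of pre-trained neural link predictor.

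The main obstacle I anticipate is making the comparison with FIT fully precise rather than merely term-by-term heuristic. FIT's step-by-step procedure may store additional intermediate objects whose sizes depend on the query structure, so I would need to argue that none of these introduce a term that exceeds NSMP's budget, and in particular that any per-query intermediate storage in both methods is dominated by the persistent adjacency-matrix and embedding storage. I would handle this by noting that both methods ultimately manipulate vectors and matrices over the entity set of size $|\mathcal{V}|$, and that NSMP's dynamic pruning only reduces the number of active messages, so its peak intermediate footprint never exceeds a constant number of $\mathcal{O}(|\mathcal{V}|)$-sized fuzzy vectors plus the stored matrices. The delicate part is justifying the inequality $\mathcal{L}_N < \mathcal{L}_Q$ cleanly; since this is already argued in the proof of Proposition \ref{prop:prop4}, I would simply cite that result and conclude that NSMP achieves superior space complexity compared to FIT.
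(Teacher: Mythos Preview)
Your proposal is correct and follows essentially the same approach as the paper: decompose the storage into the neural embedding term $\mathcal{O}((|\mathcal{V}|+|\mathcal{R}|)d)$ shared by both methods and the symbolic adjacency-matrix term $\mathcal{O}(|\mathcal{R}|\cdot\mathcal{L}_N)$, then invoke the inequality $\mathcal{L}_N < \mathcal{L}_Q$ established in the proof of Proposition~\ref{prop:prop4} to conclude superiority over FIT. Your additional remarks on intermediate fuzzy vectors and per-query storage being dominated are more detailed than the paper's version but do not alter the argument.
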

\begin{proof}
For the space complexity of neural components, both NSMP and FIT use a pre-trained neural link predictor with a complexity of $\mathcal{O}((|\mathcal{V}|+|\mathcal{R}|)d)$, where $d$ is the embedding dimension.
The symbolic one-hop inference component of NSMP utilizes relational adjacency matrices, which contain $|\mathcal{R}|\cdot |\mathcal{V}|^2$ entries. 
However, due to the sparsity of KG, most entries are $0$. With the help of sparse matrix techniques, the adjacency matrices can be stored efficiently. 
The neural adjacency matrix used in FIT also contains $|\mathcal{R}|\cdot |\mathcal{V}|^2$ entries and can be efficiently stored by setting appropriate thresholds. 
However, according to the proof of Proposition \ref{prop:prop4}, the relational adjacency matrix used in NSMP exhibits higher sparsity than the neural adjacency matrix used in FIT, i.e., $\mathcal{L}_N<\mathcal{L}_Q=\mathcal{L}_F<|\mathcal{V}|^2$, where $\mathcal{L}_F$ is the total number of non-zero elements in the sparse neural adjacency matrix of FIT. Consequently, NSMP achieves superior space complexity compared to FIT, with its space complexity approximately given by $\mathcal{O}((|\mathcal{V}|+|\mathcal{R}|)d+|\mathcal{R}|\cdot \mathcal{L}_N)$. 
\end{proof}

\section{More Details about the Datasets}
\label{More Details about the Datasets}

The statistics of the two knowledge graphs used in our experiment are shown in Table \ref{KG stat}. 
As we described in Section \ref{experimental setting}, we evaluate our model using both BetaE \citep{ren2020beta} datasets and FIT \citep{yin2024rethinking} datasets. 
The statistics for the BetaE datasets are shown in Table \ref{BetaE stat}.  
Since our model does not require training on complex queries, we use only the test split of the BetaE datasets. 
In addition, as stated in Section \ref{experimental setting}, the ``pni'' query in the BetaE datasets is answered as the universal quantifier version. 
Consequently, we do not evaluate ``pni'' queries in BetaE datasets, but in FIT datasets. 
Specifically, the query types we evaluated in BetaE datasets are shown in Figure \ref{figure4}. 
For the FIT datasets, which contain only test queries, the statistics are shown in Table \ref{FIT stat}. The query types of the FIT datasets are shown in Figure \ref{figure5}. 

\begin{table}[htbp]
  \centering
  \caption{Statistics of knowledge graphs as well as training, validation and test edge splits. }
  \resizebox{0.9\linewidth}{!}{
    \begin{tabular}{ccccccc}
    \toprule
    \textbf{Knowledge Graph} & \textbf{Entities} & \textbf{Relations} & \textbf{Training Edges} & \textbf{Val Edges} & \textbf{Test Edges} & \textbf{Total Edges} \\
    \midrule
    FB15k-237 & 14,505 & 237   & 272,115 & 17,526 & 20,438 & 310,079 \\
    \midrule
    NELL995 & 63,361 & 200   & 114,213 & 14,324 & 14,267 & 142,804 \\
    \bottomrule
    \end{tabular}%
    }
  \label{KG stat}%
\end{table}%
\begin{table}[htbp]
  \centering
  \caption{Statistics of different query types used in the BetaE datasets.}
  \resizebox{0.85\linewidth}{!}{
    \begin{tabular}{ccccccc}
    \toprule
    \multirow{2}{*}{\textbf{Knowledge Graph}} & \multicolumn{2}{c}{\textbf{Training Queries}} & \multicolumn{2}{c}{\textbf{Validation Queries}} & \multicolumn{2}{c}{\textbf{Test Queries}} \\
    \cmidrule(lr){2-3} \cmidrule(lr){4-5} \cmidrule(lr){6-7}
     & 1p/2p/3p/2i/3i & 2in/3in/inp/pin/pni & 1p    & Others & 1p    & Others \\
    \midrule
    FB15k-237 & 149,689 & 14,968 & 20,101 & 5,000 & 22,812 & 5,000 \\
    NELL995 & 107,982 & 10,798 & 16,927 & 4,000 & 17,034 & 4,000 \\
    \bottomrule
    \end{tabular}%
    }
  \label{BetaE stat}%
\end{table}%
\begin{table}[htbp]
  \centering
  \caption{Statistics of different query types used in the FIT datasets.}
  \resizebox{0.9\linewidth}{!}{
    \begin{tabular}{ccccccccccc}
    \toprule
    \textbf{Knowledge Graph} & \textbf{pni} & \textbf{2il} & \textbf{3il} & \textbf{2m} & \textbf{2nm} & \textbf{3mp} & \textbf{3pm} & \textbf{im} & \textbf{3c} & \textbf{3cm} \\
    \midrule
    FB15k-237 & 5,000 & 5,000 & 5,000 & 5,000 & 5,000 & 5,000 & 5,000 & 5,000 & 5,000 & 5,000 \\
    NELL995 & 4,000 & 5,000 & 5,000 & 5,000 & 5,000 & 5,000 & 5,000 & 5,000 & 5,000 & 5,000 \\
    \bottomrule
    \end{tabular}%
    }
  \label{FIT stat}%
\end{table}%

\begin{figure}
    \centering
    \includegraphics[width=0.9\textwidth]{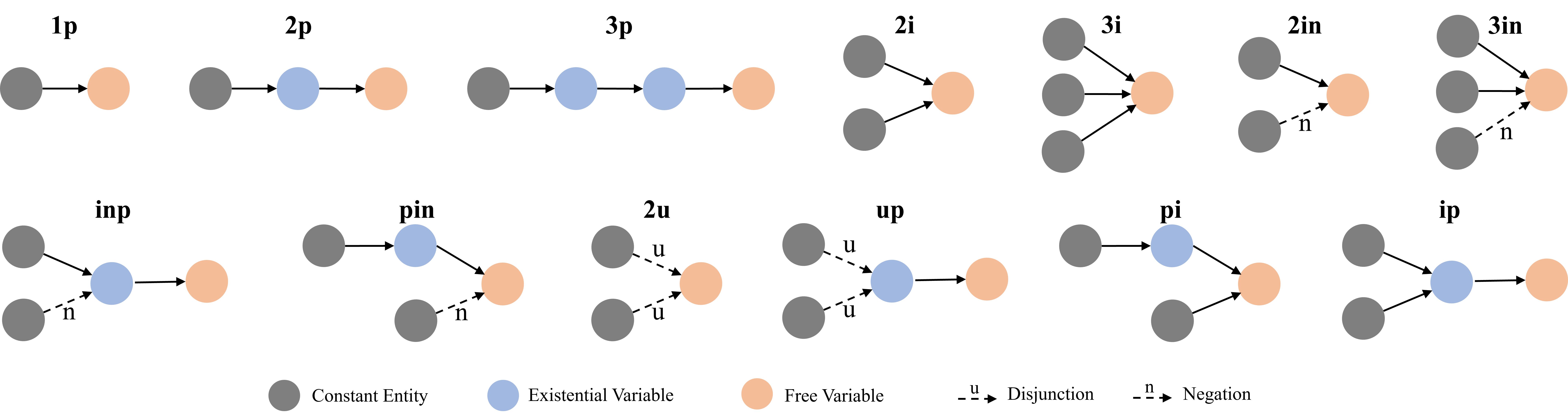}
    \caption{Graphical representation of the query types of the BetaE dataset considered in our experiment, where $p$, $i$, $u$, and $n$ represent projection, intersection, union, and negation, respectively. }
    \label{figure4}
\end{figure}

\begin{figure}
    \centering
    \includegraphics[width=0.6\textwidth]{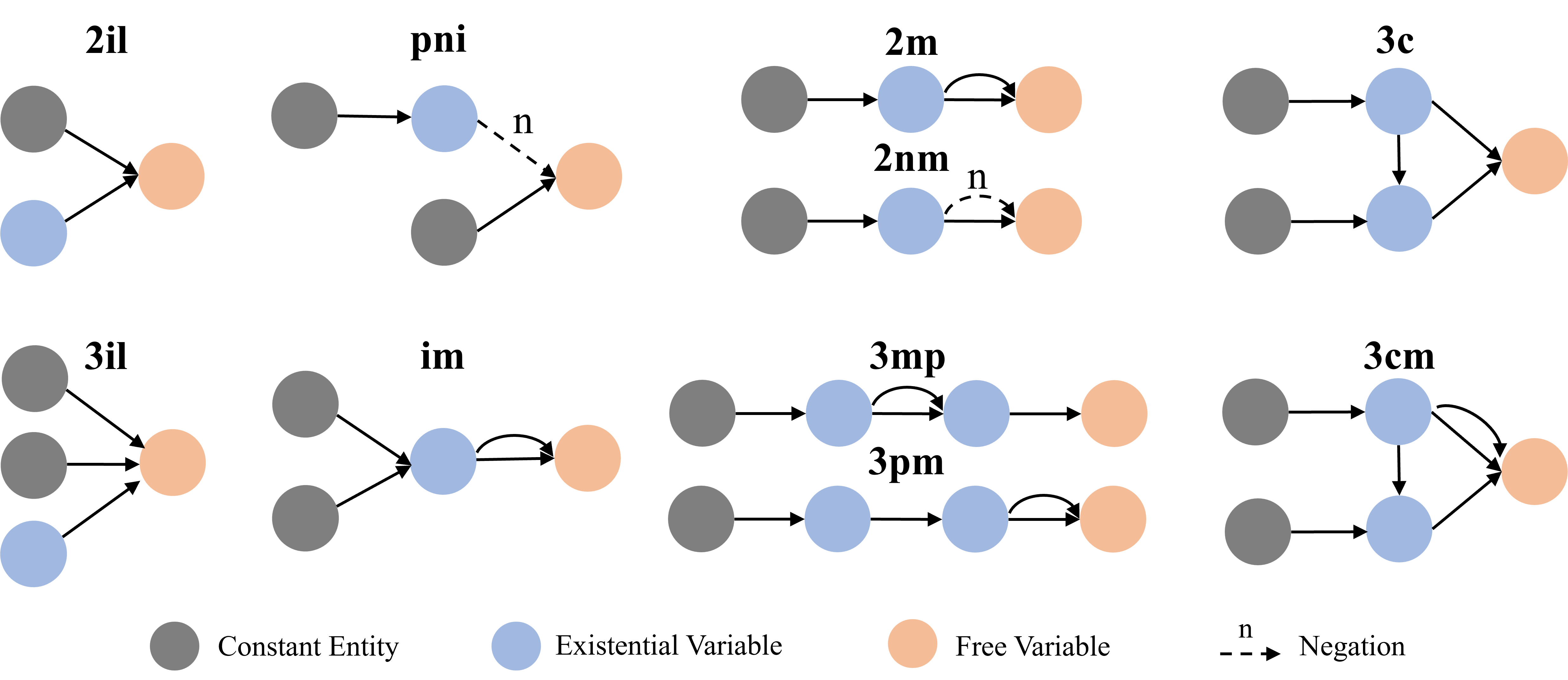}
    \caption{Graphical representation of the query types of the FIT dataset considered in our experiment, where $l$, $m$, and $c$ represent existential leaf, multi graph, and circle, respectively. }
    \label{figure5}
\end{figure}

\section{Details about the Model, Implementation and Experiments}
\label{Details about the Model, Implementation and Experiments}

We use NVIDIA RTX 3090 (24GB) and NVIDIA A100 (40GB) to conduct all of our experiments. Following \citep{arakelyan2021complex, wang2023logical, yin2024rethinking}, we select ComplEx-N3 \citep{trouillon2016complex, lacroix2018canonical} as the neural link predictor and use the checkpoints released by \citep{arakelyan2021complex} for a fair comparison. The rank of ComplEx-N3 is 1,000, and the epoch for the checkpoints is 100. To determine the optimal hyperparameters of NSMP, we employ grid search. Specifically, the value for $\alpha$ is selected from \{1, 10, 100, 1000, 10000, $|\mathcal{V}|$\}, 
for $\epsilon$ from \{1e-2, 1e-4, 1e-6, 1e-8, 1e-10, 1e-12, 1e-14, 1e-16\}, 
for $\lambda$ from \{0, 0.1, 0.2, 0.3, 0.4, 0.5, 0.6, 0.7, 0.8, 0.9, 1.0\}, and for $L$ from \{$D$, $D+1$, $D+2$, $D+3$\}. After experimental analyses on hyperparameters, we choose $\lambda$ as $0.3$ for FB15k-237 and $0.1$ for NELL995. 
The threshold $\epsilon$ is 1e-14, the number of layers $L$ is $D+1$, and $\alpha$ is $100$ for FB15k-237 and $1000$ for NELL995.

In the case of choosing ComplEx-N3 as the neural link predictor, according to \citep{wang2023logical}, given the corresponding complex embeddings, the neural message encoding function $\rho$ can be derived into the following closed-form expressions for all four cases: 
\begin{align}
    \rho (t,r,D_{t\to h},Pos) &= \frac{\overline{r}\otimes t}{\sqrt{3\beta \left \| \overline{r}\otimes t \right \|  } },  \label{rho11} \\
    \rho (h,r,D_{h\to t},Pos) &= \frac{r\otimes h}{\sqrt{3\beta \left \| r\otimes h \right \|  } },  \label{rho21} \\
    \rho (t,r,D_{t\to h},Neg) &= \frac{-\overline{r}\otimes t}{\sqrt{3\beta \left \| \overline{r}\otimes t \right \|  } },  \label{rho31} \\
    \rho (h,r,D_{h\to t},Neg) &= \frac{-r \otimes h}{\sqrt{3\beta \left \| r\otimes h \right \|  } },  \label{rho41} 
\end{align}
where $\beta$ is a hyperparameter that needs to be determined. In our application, we follow previous works \citep{wang2023logical,zhang2024conditional} and let $3\beta \left \| \cdot \right \| = 1$ for simplicity. Thus, $\rho$ is simplified to the following expressions for four cases: 
\begin{align}
    \rho (t,r,D_{t\to h},Pos) &:= \overline{r}\otimes t,  \label{app rho1} \\
    \rho (h,r,D_{h\to t},Pos) &:= r \otimes h,  \label{app rho2} \\ 
    \rho (t,r,D_{t\to h},Neg) &:= -\overline{r}\otimes t,  \label{app rho3} \\
    \rho (h,r,D_{h\to t},Neg) &:= -r \otimes h.  \label{app rho4}
\end{align}
For closed-form expressions of other neural link predictors, please refer to \citep{wang2023logical}.

For the experiments evaluating the relative speedup of NSMP over FIT \citep{yin2024rethinking} in terms of inference time on the FIT datasets, we conduct the experiments on an NVIDIA A100 GPU. Specifically, we measure the average time required by NSMP and FIT to process each type of test query on the FIT datasets. To ensure a fair comparison, the batch size during the testing phase is set to $1$. The relative speedup is presented in Figure \ref{figure3}, and detailed inference times are provided in Table \ref{infer time stat}.

\begin{table}[htbp]
  \centering
  \caption{Inference time (ms/query) on each query type on FIT datasets, evaluated on one NVIDIA A100 GPU. }
  \resizebox{0.85\linewidth}{!}{
    \begin{tabular}{cccccccccccc}
    \toprule
    \textbf{Knowledge Graph} & \textbf{Model} & \textbf{pni} & \textbf{2il} & \textbf{3il} & \textbf{2m} & \textbf{2nm} & \textbf{3mp} & \textbf{3pm} & \textbf{im} & \textbf{3c} & \textbf{3cm} \\
    \midrule
    \multirow{2}[2]{*}{FB15k-237} & FIT   & 93.4  & 48.4  & 73.4  & 47.8  & 71.2  & 69.4  & 74.4  & 69.8  & 2482.2 & 2824.2 \\
          & NSMP  & 15.8  & 10.6  & 14.6  & 15.2  & 15.2  & 25.6  & 23.4  & 18.8  & 34.0  & 40.6 \\
    \midrule
    \multirow{2}[2]{*}{NELL995} & FIT   & 843.0 & 419.0 & 630.6 & 422.6 & 649.6 & 636.0 & 632.4 & 634.2 & 11832.4 & 11149.9 \\
          & NSMP  & 41.5  & 37.0  & 39.6  & 41.0  & 38.0  & 61.2  & 57.8  & 44.2  & 78.2  & 90.4 \\
    \bottomrule
    \end{tabular}%
    }
  \label{infer time stat}%
\end{table}%

\section{Comparison with More Neural CQA Models on BetaE Datasets}
\label{Comparison with More Neural CQA Models on BetaE Datasets}

To further evaluate the performance of NSMP, 
we also consider comparing more neural CQA models on BetaE \citep{ren2020beta} datasets, including Q2P \citep{bai2022query2particles}, MLP \citep{amayuelas2022neural}, GammaE \citep{yang2022gammae}, CylE \citep{nguyen2023cyle}, WRFE \citep{wang2023wasserstein}, and Pathformer \citep{zhang2024pathformer}. The reported MRR results are from these papers \citep{wang2023wasserstein,nguyen2023cyle,zhang2024pathformer}. As shown in the results in Table \ref{more CQA baselines}, our model reaches the best performance across all query types on both FB15k-237 and NELL995, indicating the effectiveness of NSMP.

\begin{table}[htbp]
  \centering
  \caption{MRR results of other neural CQA models and our model on BetaE datasets. The average score is calculated separately among positive and negative queries. Highlighted are the top \textbf{first} results. }
  \resizebox{0.9\linewidth}{!}{
    \begin{tabular}{ccccccccccccccccc}
    \toprule
    \textbf{KG} & \textbf{Model} & \textbf{1p} & \textbf{2p} & \textbf{3p} & \textbf{2i} & \textbf{3i} & \textbf{pi} & \textbf{ip} & \textbf{2u} & \textbf{up} & \textbf{AVG.(P)} & \textbf{2in} & \textbf{3in} & \textbf{inp} & \textbf{pin} & \textbf{AVG.(N)} \\
    \midrule
    \multirow{7}[2]{*}{FB15k-237} & Q2P   & 39.1  & 11.4  & 10.1  & 32.3  & 47.7  & 24.0    & 14.3  & 8.7   & 9.1   & 21.9  & 4.4   & 9.7   & 7.5   & 4.6   & 6.6 \\
          & MLP   & 42.7  & 12.4  & 10.6  & 31.7  & 43.9  & 24.2  & 14.9  & 13.7  & 9.7   & 22.6  & 6.6   & 10.7  & 8.1   & 4.7   & 7.5 \\
          & GammaE & 43.2  & 13.2  & 11.0    & 33.5  & 47.9  & 27.2  & 15.9  & 13.9  & 10.3  & 24.0  & 6.7   & 9.4   & 8.6   & 4.8   & 7.4 \\
          & Pathformer & 44.8  & 12.9  & 10.6  & 34.2  & 47.3  & 26.2  & 17.0  & 14.9  & 10.0  & 24.2  & 6.4   & 11.6  & 8.3   & 4.7   & 7.8 \\
          & CylE  & 42.9  & 13.3  & 11.3  & 35.0  & 49.0  & 27.0  & 15.7  & 15.3  & 11.2  & 24.5  & 4.9   & 8.3   & 8.2   & 3.7   & 6.3 \\
          & WRFE  & 44.1  & 13.4  & 11.1  & 35.1  & 50.1  & 27.4  & 17.2  & 13.9  & 10.9  & 24.8  & 6.9   & 11.2  & 8.5   & 5.0   & 7.9 \\
          & NSMP  & \textbf{46.7} & \textbf{15.1} & \textbf{12.3} & \textbf{38.7} & \textbf{52.2} & \textbf{31.2} & \textbf{23.3} & \textbf{17.2} & \textbf{11.9} & \textbf{27.6} & \textbf{11.9} & \textbf{17.6} & \textbf{10.8} & \textbf{7.9} & \textbf{12.0} \\
    \midrule
    \multirow{7}[2]{*}{NELL995} & Q2P   & 56.5  & 15.2  & 12.5  & 35.8  & 48.7  & 22.6  & 16.1  & 11.1  & 10.4  & 25.5  & 5.1   & 7.4   & 10.2  & 3.3   & 6.5 \\
          & MLP   & 55.2  & 16.8  & 14.9  & 36.4  & 48.0  & 22.7  & 18.2  & 14.7  & 11.3  & 26.5  & 5.1   & 8.0     & 10.0    & 3.6   & 6.7 \\
          & GammaE & 55.1  & 17.3  & 14.2  & 41.9  & 51.1  & 26.9  & 18.3  & 15.1  & 11.2  & 27.9  & 6.3   & 8.7   & 11.4  & 4.0     & 7.6 \\
          & Pathformer & 56.4  & 17.4  & 14.9  & 39.9  & 50.4  & 26.0  & 19.4  & 14.4  & 11.1  & 27.8  & 5.1   & 8.6   & 10.3  & 3.9   & 7.0 \\
          & CylE  & 56.5  & 17.5  & 15.6  & 41.4  & 51.2  & 27.2  & 19.6  & 15.7  & 12.3  & 28.6  & 5.6   & 7.5   & 11.2  & 3.4   & 6.9 \\
          & WRFE  & 58.6  & 18.6  & 16.0    & 41.2  & 52.7  & 28.4  & 20.7  & 16.1  & 13.2  & 29.5  & 6.9   & 8.8   & 12.5  & 4.1   & 8.1 \\
          & NSMP  & \textbf{60.8} & \textbf{21.6} & \textbf{17.6} & \textbf{44.2} & \textbf{53.6} & \textbf{33.7} & \textbf{26.7} & \textbf{19.1} & \textbf{14.4} & \textbf{32.4} & \textbf{12.4} & \textbf{15.5} & \textbf{13.7} & \textbf{7.9} & \textbf{12.4} \\
    \bottomrule
    \end{tabular}%
    }
  \label{more CQA baselines}%
\end{table}%

\section{Analysis on More Hyperparameters}
\label{Analysis on More Hyperparameters}

For the evaluation of hyperparameters $\epsilon$, $\alpha$, and $\lambda$, we conduct experiments on FB15k-237. Specifically, we compare the effects of different hyperparameters on model performance under default hyperparameter settings. 
We evaluate the performance of various settings for the hyperparameter $\epsilon$ on the BetaE and FIT datasets. Specifically, we report the average MRR results for different $\epsilon$ values on FB15k-237, as presented in Table \ref{epsilon experi}, where AVG.(P), AVG.(N), and AVG.(F) represent the average scores for positive queries, negative queries on the BetaE datasets, and the average scores on the FIT datasets, respectively. 
It can be seen from these MRR results that $\epsilon$ of the threshold normalization function (see Equation \ref{normalized function}) has a significant impact on the model performance. 
In our main experiments, we set $\epsilon=1e-14$ by default, as this setting achieved the best performance in the hyperparameter experiments. 
For the hyperparameter $\alpha$, since it only influences the results of the negative queries, we evaluate $\alpha$ using the negative queries from both the BetaE and FIT datasets. As shown in the results in Table \ref{alpha experi}, $\alpha$ has a minor impact on the model performance, but $\alpha=100$ achieves slightly better results, so we set $\alpha=100$ for the experiments on FB15k-237. 
Similarly, we evaluate the performance of various settings for the hyperparameter $\lambda$ on the BetaE and FIT datasets and report the average MRR results, as shown in Table \ref{lambda experi}. Notably, except for $\lambda=0$, the other hyperparameter settings exhibit comparable performance. 
These results suggest that relying solely on aggregated neural representations (i.e., the $\lambda=0$ case) is insufficient for addressing complex queries. We adopt a default setting of  $\lambda=0.3$, as it achieves a relatively balanced performance. 

In particular, even when applying the hyperparameter configuration optimized for FB15k-237 (i.e., $\lambda=0.3$, $\alpha=100$) to NELL995, NSMP maintains stable performance on NELL995. The results are shown in Table \ref{robust}. These results offer a robust default hyperparameter configuration.

\begin{table}[htbp]
  \centering
  \caption{Average MRR results for different hyperparameters $\epsilon$ on FB15k-237.}
  \resizebox{0.4\linewidth}{!}{
    \begin{tabular}{cccc}
    \toprule
    \textbf{Model} & \textbf{AVG.(P)} & \textbf{AVG.(N)} & \textbf{AVG.(F)} \\
    \midrule
    $\epsilon=1e-2$   & 10.5  & 0.1   & 6.0 \\
    $\epsilon=1e-4$ & 18.4  & 0.1  & 12.6 \\
    $\epsilon=1e-6$ & 24.5  & 2.2  & 17.3 \\
    $\epsilon=1e-8$ & 26.9  & 8.2  & 20.7 \\
    $\epsilon=1e-10$ & 27.6  & 11.0  & 22.2 \\
    $\epsilon=1e-12$ & 27.6  & 12.0  & 22.7 \\
    $\epsilon=1e-14$ & 27.6  & 12.0  & 22.8 \\
    $\epsilon=1e-16$ & 27.6 & 12.0  & 22.8 \\
    \bottomrule
    \end{tabular}%
    }
  \label{epsilon experi}%
\end{table}%

\begin{table}[htbp]
  \centering
  \caption{MRR results for different hyperparameters $\alpha$ on FB15k-237.}
  \resizebox{0.5\linewidth}{!}{
    \begin{tabular}{cccccccc}
    \toprule
    \textbf{Model} & \textbf{2in} & \textbf{3in} & \textbf{inp} & \textbf{pin} & \textbf{pni} & \textbf{2nm} & \textbf{AVG} \\
    \midrule
    $\alpha=1$   & 11.9  & 17.6  & 10.7  & 8.0   & 13.0  & 8.9   & 11.7 \\
    $\alpha=10$  & 11.9  & 17.6  & 10.7  & 8.0   & 12.8  & 9.5   & 11.8 \\
    $\alpha=100$ & 11.9  & 17.6  & 10.8  & 7.9   & 13.4  & 9.9   & 11.9 \\
    $\alpha=1000$ & 11.9  & 17.6  & 10.8  & 7.9   & 13.7  & 9.6   & 11.9 \\
    $\alpha=10000$ & 11.7  & 17.6  & 10.8  & 7.8   & 13.9  & 9.0   & 11.8 \\
    $\alpha=|\mathcal{V}|$ & 11.7  & 17.5  & 10.8  & 7.8   & 13.7  & 8.4   & 11.7 \\
    \bottomrule
    \end{tabular}%
    }
  \label{alpha experi}%
\end{table}%

\begin{table}[htbp]
  \centering
  \caption{Average MRR results for different hyperparameters $\lambda$ on FB15k-237.}
  \resizebox{0.4\linewidth}{!}{
    \begin{tabular}{cccc}
    \toprule
    \textbf{Model} & \textbf{AVG.(P)} & \textbf{AVG.(N)} & \textbf{AVG.(F)} \\
    \midrule
    $\lambda=0$   & 19.8  & 5.5   & 17.4 \\
    $\lambda=0.1$ & 27.5  & 12.0  & 22.6 \\
    $\lambda=0.2$ & 27.6  & 12.0  & 22.7 \\
    $\lambda=0.3$ & 27.6  & 12.0  & 22.8 \\
    $\lambda=0.4$ & 27.6  & 12.0  & 22.8 \\
    $\lambda=0.5$ & 27.5  & 12.0  & 22.8 \\
    $\lambda=0.6$ & 27.5  & 12.0  & 22.8 \\
    $\lambda=0.7$ & 27.5  & 12.0  & 22.8 \\
    $\lambda=0.8$ & 27.4  & 12.0  & 22.8 \\
    $\lambda=0.9$ & 27.4  & 12.0  & 22.8 \\
    $\lambda=1.0$ & 27.4  & 12.0  & 22.9 \\
    \bottomrule
    \end{tabular}%
    }
  \label{lambda experi}%
\end{table}%

\begin{table}[htbp]
  \centering
  \caption{Average MRR results of NSMP with different hyperparameter configurations on NELL995. }
    \begin{tabular}{cccc}
    \toprule
    \textbf{Model} & \textbf{AVG.(P)} & \textbf{AVG.(N)} & \textbf{AVG.(F)} \\
    \midrule
    $\lambda=0.3$, $\alpha=100$  & 32.3  & 12.4  & 35.6 \\
    $\lambda=0.1$, $\alpha=1000$ (Best Choice) & 32.4  & 12.4  & 35.7 \\
    \bottomrule
    \end{tabular}%
  \label{robust}%
\end{table}%

\section{Case Study}
\label{case study}

To verify whether NSMP can provide interpretability, we sample an ``ip'' query from FB15k-237 to visualize the corresponding entity ranking derived from the final fuzzy state of each variable after message passing. Specifically, the $\text{EFO}_1$ formula for the ``ip'' query we sampled is $\exists x, r_1(x, e_1)\wedge r_2(x, e_2)\wedge r_3(x,y)$, where $e_1$ is \textit{Adventure Film}, $e_2$ is \textit{The Expendables}, $r_1$ is \textit{Genre}, $r_2$ is \textit{Prequel}, and $r_3$ is \textit{Film Regional Debut Venue}. 
This query has the following hard answers: \textit{Paris}, \textit{Buenos Aires}, \textit{Madrid}, \textit{Los Angeles}, \textit{London} and \textit{Belgrade}. 
After applying NSMP to the query, for each variable node, we find the top five entities with the highest probability from its final state based on Equation \ref{ns_eva} and a softmax operation. The results are shown in Table \ref{case study table}. The results indicate that the fuzzy set of each variable can be used to represent its membership degrees across all entities, thereby providing interpretability. 
Although NSMP cannot sample a specific reasoning path like step-by-step methods \cite{arakelyan2023adapting, bai2023answering}, such as the path sampled by beam search \citep{arakelyan2021complex}, the introduction of fuzzy logic allows each variable's state to be represented as a fuzzy vector that defines a fuzzy set. These fuzzy vectors can be leveraged to enhance interpretability. Compared to neural embeddings as representations of variable states, fuzzy vectors offer a more intuitive explanation of the current state of variables, thereby improving interpretability.

\begin{table}[htbp]
  \centering
  \caption{The top five entities with the highest probability for each variable and their corresponding probabilities. $\checkmark$ indicates that the correct entity is hit. }
    \begin{tabular}{cc|cc}
    \toprule
    \multicolumn{2}{c|}{$x$} & \multicolumn{2}{c}{$y$} \\
    \midrule
    Entity & Probability & Entity & Probability \\
    \midrule
    \textit{The Expendables 2} $\checkmark$ & $1.0$     & \textit{Buenos Aires} $\checkmark$ & $0.36$ \\
    \textit{Green Lantern} & $almost\ 0$ & \textit{Los Angeles} $\checkmark$ & $0.32$ \\
    \textit{The Dark Knight} & $almost\ 0$ & \textit{London} $\checkmark$ & $0.29$ \\
    \textit{Battle Royale} & $almost\ 0$ & \textit{Madrid} $\checkmark$ & $0.03$ \\
    \textit{freak folk} & $almost\ 0$ & \textit{Toronto International Film Festival} & $almost\ 0$ \\
    \bottomrule
    \end{tabular}%
  \label{case study table}%
\end{table}%

\section{Comparison with $\text{CQD}^\mathcal{A}$}
\label{comparison with cqda}

The results of $\text{CQD}^{\mathcal{A}}$ compared in our paper are derived from \citep{arakelyan2023adapting}. \citet{arakelyan2023adapting} obtained these results by training on the dataset that includes all ``2i'', ``3i'', ``2in'', and ``3in'' queries. However, our proposed NSMP does not require training on complex queries. To fairly compare $\text{CQD}^{\mathcal{A}}$ and NSMP, we compare $\text{CQD}^{\mathcal{A}}$ using the two data-efficient variants proposed in \citep{arakelyan2023adapting} (i.e., "FB237, 1\%" and "FB237, 2i, 1\%"). These two variants are trained using $1\%$ training data and $1\%$ 2i queries on FB15k-237, respectively, in which case it is relatively fair to compare NSMP, since NSMP has no trainable parameters. The results are shown in Table \ref{cqda}. From these results, in the case where both of these variants still require training data (though very data-efficient), the NSMP, which has no trainable parameters, improves the average MRR score on positive queries by $16.9\%$ and $19.5\%$ respectively compared to these two variants. On negative queries, the average MRR score is improved by $33.3\%$ and $62.2\%$, respectively. Considering such significant performance improvements, we believe that the superiority of NSMP over $\text{CQD}^{\mathcal{A}}$ has been demonstrated.

\begin{table}[htbp]
  \centering
  \caption{MRR results of $\text{CQD}^{\mathcal{A}}$ and its two data-efficient variants on FB15k-237. The average score is calculated separately among positive and negative queries. Highlighted are the top \textbf{first} results.}
  \resizebox{\linewidth}{!}{
    \begin{tabular}{cccccccccccccccc}
    \toprule
    \textbf{Model} & \textbf{1p} & \textbf{2p} & \textbf{3p} & \textbf{2i} & \textbf{3i} & \textbf{pi} & \textbf{ip} & \textbf{2u} & \textbf{up} & \textbf{AVG.(P)} & \textbf{2in} & \textbf{3in} & \textbf{inp} & \textbf{pin} & \textbf{AVG.(N)} \\
    \midrule
    $\text{CQD}^{\mathcal{A}}$ (FB237,2i,3i,2in,3in) & \textbf{46.7} & 13.6  & 11.4  & 34.5  & 48.3  & 27.4  & 20.9  & \textbf{17.6} & 11.4  & 25.7  & \textbf{13.6} & 16.8  & 7.9   & \textbf{8.9} & 11.8 \\
    $\text{CQD}^{\mathcal{A}}$ (FB237,1\%) & \textbf{46.7} & 11.8  & 11.4  & 33.6  & 41.2  & 24.8  & 17.8  & 16.5  & 8.7   & 23.6  & 10.8  & 13.9  & 5.9   & 5.4   & 9.0 \\
    $\text{CQD}^{\mathcal{A}}$ (FB237,2i,1\%) & \textbf{46.7} & 11.8  & 11.2  & 30.4  & 40.8  & 23.4  & 18.3  & 15.9  & 9.0   & 23.1  & 9.4   & 10.3  & 5.2   & 4.5   & 7.4 \\
    NSMP (No trainable Parameters) & \textbf{46.7} & \textbf{15.1} & \textbf{12.3} & \textbf{38.7} & \textbf{52.2} & \textbf{31.2} & \textbf{23.3} & 17.2  & \textbf{11.9} & \textbf{27.6} & 11.9  & \textbf{17.6} & \textbf{10.8} & 7.9   & \textbf{12.0} \\
    \bottomrule
    \end{tabular}%
    }
  \label{cqda}%
\end{table}%

\section{How Much Efficiency Improvement Can Dynamic Pruning Achieve?}
\label{dp efficiency}

The dynamic pruning strategy filters out noisy messages between variable nodes, thereby reducing the computational cost associated with these messages. As discussed in Section \ref{complexity}, the primary computational bottleneck of NSMP lies in the computation of neural-symbolic messages. This indicates that the dynamic pruning strategy effectively alleviates a portion of the computational burden during message passing. Additionally, the dynamic pruning strategy does not update the states of variable nodes that do not receive any messages, further reducing the computation required for state updates. In summary, the dynamic pruning strategy can reduce the state update computation and message computation in the forward process, effectively improving inference efficiency. 
To verify this, we conduct experiments on inference time on an NVIDIA RTX 3090 GPU. We consider the comparison between the original NSMP (i.e., the version with dynamic pruning) and NSMP without dynamic pruning (i.e., NSMP w/o DP) on FB15k-237. Specifically, we measured the average inference time required by the original NSMP and NSMP w/o DP to process each type of test query on FIT datasets. The detailed inference times, expressed in milliseconds per query (ms/query), are provided in Table \ref{dp efficient experiment}. From the results, it can be observed that the dynamic pruning strategy not only effectively enhances model performance (see Section \ref{ablation}) but also improves efficiency. Compared to the variant without dynamic pruning, NSMP achieves a $23.8\%$ reduction in average inference time. We believe that these empirical results validate our view that the dynamic pruning strategy can effectively enhance efficiency.

\begin{table}[htbp]
  \centering
  \caption{Inference time (ms/query) on each query type on FB15k-237.}
  \resizebox{0.85\linewidth}{!}{
    \begin{tabular}{cccccccccccc}
    \toprule
    \textbf{Model} & \textbf{pni} & \textbf{2il} & \textbf{3il} & \textbf{2m} & \textbf{2nm} & \textbf{3mp} & \textbf{3pm} & \textbf{im} & \textbf{3c} & \textbf{3cm} & \textbf{Avg} \\
    \midrule
    NSMP w/o DP & 14.4  & 11.4  & 14.0  & 16.6  & 16.6  & 29.2  & 29.2  & 19.4  & 32.8  & 39.8  & 22.3 \\
    NSMP  & 12.8  & 9.0   & 11.8  & 11.8  & 12.0  & 20.6  & 18.8  & 14.8  & 27.0  & 31.8  & 17.0 \\
    \bottomrule
    \end{tabular}%
    }
  \label{dp efficient experiment}%
\end{table}%


\end{document}